\setlist{nolistsep}
\newtheorem{defn}{Definition}[]
\newtheorem{thm}{Theorem}[]
\newtheorem{lemma}{Lemma}[]
\newtheorem{ass}{Assumption}[]
\newcommand{\pib}{{\pi_b}}
\newcommand{\pitheta}{{\pi_\btheta}}
\newcommand{\pieval}{{\pi_e}}
\newcommand{\btheta}{{\boldsymbol\theta}}
\newcommand{\var}{{\operatorname{Var}}}
\newcommand{\mse}{{\operatorname{MSE}}}
\newcommand{\IS}{\operatorname{IS}}
\newcommand{\RIS}{\operatorname{RIS}}
\newcommand{\OIS}{\operatorname{OIS}}
\newcommand{\REG}{\operatorname{REG}}
\newcommand{\WIS}{\operatorname{WIS}}
\newcommand{\WDR}{\operatorname{WDR}}
\newcommand{\PDIS}{\operatorname{PDIS}}
\newcommand{\DR}{\operatorname{DR}}
\newcommand{\Dtrain}{\mathcal{D}_\mathtt{train}}
\newcommand{\pidata}{{\pi_\mathcal{D}}}
\def\Exp#1#2{{\mathbf{E}_#1\left[#2\right]}}
\newcommand{\loss}{{\mathcal{L}}}
\newcommand{\data}{{\mathcal{D}}}
\DeclareMathOperator*{\argmax}{\arg\!\max}
\newtheorem{corollary}{Corollary}
\newtheorem{proposition}{Proposition}
\icmltitlerunning{Importance Sampling Policy Evaluation with an Estimated Behavior Policy}
\begin{document}

\twocolumn[
%\icmltitle{Regression Importance Sampling for Off-policy Policy Evaluation with an Unknown Behavior Policy}
\icmltitle{Importance Sampling Policy Evaluation with an Estimated Behavior Policy}

% It is OKAY to include author information, even for blind
% submissions: the style file will automatically remove it for you
% unless you've provided the [accepted] option to the icml2016
% package.
%\icmlauthor{Josiah Hanna}{jphanna@cs.utexas.edu}
%\icmladdress{Your Fantastic Institute,
%            314159 Pi St., Palo Alto, CA 94306 USA}
\icmlsetsymbol{equal}{*}

\begin{icmlauthorlist}
%\icmlauthor{Anonymous}{}
\icmlauthor{Josiah P. Hanna}{uta}
\icmlauthor{Scott Niekum}{uta}
\icmlauthor{Peter Stone}{uta}
\end{icmlauthorlist}

\icmlaffiliation{uta}{The University of Texas at Austin, Austin, Texas, USA}
\icmlcorrespondingauthor{Josiah P. Hanna}{jphanna@cs.utexas.edu}
%\icmlaffiliation{afil}{Anonymize}
%\icmlcorrespondingauthor{Anonymous}{anonymous@anon.com}

% You may provide any keywords that you 
% find helpful for describing your paper; these are used to populate 
% the "keywords" metadata in the PDF but will not be shown in the document
\icmlkeywords{off-policy evaluation, importance-sampling}

\vskip 0.3in
]

%\printAffiliationsAndNotice{\icmlEqualContribution} % otherwise use the standard text.
\printAffiliationsAndNotice{}

\begin{abstract}

We consider the problem of off-policy evaluation in Markov decision processes. Off-policy evaluation is the task of evaluating the expected return of one policy with data generated by a different, \textit{behavior} policy. Importance sampling is a technique for off-policy evaluation that re-weights off-policy returns to account for differences in the likelihood of the returns between the two policies. In this paper, we study importance sampling with an estimated behavior policy where the behavior policy estimate comes from the same set of data used to compute the importance sampling estimate. We find that this estimator often lowers the mean squared error of off-policy evaluation compared to importance sampling with the true behavior policy or using a behavior policy that is estimated from a separate data set. Intuitively, estimating the behavior policy in this way corrects for error due to sampling in the action-space. Our empirical results also extend to other popular variants of importance sampling and show that estimating a non-Markovian behavior policy can further lower large-sample mean squared error even when the true behavior policy is Markovian.

\end{abstract}

\section{Introduction}

% We study importance sampling for policy evaluation in MDPs where the behavior policy probabilities -- the probability of an action in a given state -- are replaced with their empirical estimates -- the number of times that an action actually occurred in that state.
%
% Intuitively, we ignore the known probability of an action and replace it with its estimate.
%
% It is natural to think that this estimator should perform worse
%
% In fact, we show that the use of the estimate leads to more accurate policy evaluation.
%
% Talk about Monte Carlo sampling

Sequential decision-making tasks, such as a robot manipulating objects or an autonomous vehicle deciding when to change lanes, are ubiquitous in artificial intelligence. 
For these tasks, \textit{reinforcement learning} (RL) algorithms provide a promising alternative to hand-coded skills, allowing sequential decision-making agents to acquire policies autonomously given only a reward function measuring task performance \cite{sutton1998reinforcement}.
When applying RL to real world problems, an important problem that often comes up is \textit{policy evaluation}.
In policy evaluation, the goal is to determine the expected return -- sum of rewards -- that an \textit{evaluation policy}, $\pieval$, will obtain when deployed on the task of interest.

In \emph{off-policy} policy evaluation, we are given data (in the form of state-action-reward trajectories) generated by a second \emph{behavior policy}, $\pib$.
We then use these trajectories to evaluate $\pieval$.
Accurate off-policy policy evaluation is especially important when we want to know the value of a policy before it is deployed in the real world or have many policies to evaluate and want to avoid running each one individually.
\textit{Importance sampling} addresses this problem by re-weighting returns generated by $\pib$ such that they are unbiased estimates of $\pieval$ \cite{precup2000eligibility}.
While the basic importance sampling estimator is often noted in the literature to suffer from high variance, more recent importance sampling estimators have lowered this variance \cite{thomas2016data-efficient,jiang2016doubly}.
Regardless of additional variance reduction techniques, all importance sampling variants compute the likelihood ratio $\frac{\pieval(a|s)}{\pib(a|s)}$ for all state-action pairs in the off-policy data.

In this paper, we propose to replace $\pib(a|s)$ with its empirical estimate -- that is, we replace the probability of sampling an action in a particular state with the frequency at which that action actually occurred in that state in the data.
It is natural to assume that such an estimator will yield worse performance since it replaces a known quantity with an estimated quantity.
However, research in the multi-armed bandit \cite{li2015toward,narita2019efficient}, causal inference \cite{hirano2003efficient,rosenbaum1987model}, and Monte Carlo integration \cite{henmi2007importance,delyon2016integral} literature has demonstrated that estimating the behavior policy can \textit{improve} the mean squared error of importance sampling policy evaluation.
Motivated by these results, we study the performance of such methods for policy evaluation in full Markov decision processes.

Specifically, we study a family of estimators that, given a dataset, $\mathcal{D}$, of trajectories, use $\mathcal{D}$ both to estimate the behavior policy and then to compute the importance sampling estimate.
Though related to methods in the statistics literature, the so-called regression importance sampling methods are specific to Markov decision processes where actions taken at one time-step influence the states and rewards at future time-steps.
We show empirically that regression importance sampling \textit{lowers} the mean squared error of importance sampling off-policy evaluation in both discrete and continuous action spaces.
Though our study is primarily empirical, we present theoretical results that, when the policy class of the estimated behavior policy is specified correctly, regression importance sampling is consistent and has asymptotically lower variance than using the true behavior policy for importance sampling.
To the best of our knowledge, we are the first to study this method for policy evaluation in Markov decision processes.

\section{Preliminaries}

This section formalizes our problem and introduces importance sampling off-policy evaluation.

\subsection{Notation}

We assume the environment is a finite horizon, episodic \emph{Markov decision process} with state space $\mathcal{S}$, action space $\mathcal{A}$, transition probabilities, $P$,  reward function $R$, horizon $L$, discount factor $\gamma$, and initial state distribution $d_0$ \cite{puterman2014markov}. 
A \textit{Markovian} policy, $\pi$, is a function mapping the current state to a probability distribution over actions; a policy is \textit{non-Markovian} if its action distribution is conditioned on past states or actions.
For simplicity, we assume that $\mathcal{S}$ and $\mathcal{A},$ are finite and that probability distributions are probability mass functions.\footnote{
Unless otherwise noted, all results and discussion apply equally to the discrete and continuous setting.
} 
Let $H\coloneqq(S_0,A_0,R_0,S_1,\dotsc,S_{L-1},A_{L-1},R_{L-1})$ be a \textit{trajectory}, $g(H)\coloneqq \sum_{t=0}^{L-1} \gamma^t R_t$ be the \textit{discounted return} of trajectory $H$, and  $v(\pi)\coloneqq \mathbf{E}[g(H) | H \sim \pi]$ be the expected discounted return when the policy $\pi$ is used starting from state $S_0$ sampled from the initial state distribution. 
We assume that the transition and reward functions are unknown and that the episode length, $L$, is a finite constant.

In off-policy policy evaluation, we are given a fixed \emph{evaluation policy}, $\pieval$, and a data set of $m$ trajectories and the policies that generated them: $\mathcal{D} \coloneqq \{H_i, \pib^{(i)}\}_{i=1}^{m}$ where $H_i \sim \pib^{(i)}$.
%
%While our work is motivated by settings where $\pib^{(i)}$ is unknown, this work assumes it \textit{is} known; our studied method will be applicable even if this assumption fails to hold.
%
We assume that $\forall \{H_i, \pib^{(i)}\} \in \mathcal{D}$, $\pib^{(i)}$ is Markovian  i.e., actions in $\mathcal{D}$ are independent of past states and actions given the immediate preceding state.
Our goal is to design an off-policy estimator, $\operatorname{OPE}$, that takes $\mathcal{D}$ and estimates $v(\pieval)$ with minimal mean squared error (MSE).
Formally, we wish to minimize $\mathbf{E}_{\mathcal{D}}[(\operatorname{OPE}(\pieval, \mathcal{D}) - v(\pieval))^2]$.

%The main way that our setting differs from previous work on off-policy policy evaluation for MDPs is that we do \emph{not} require the policy that generated each $H_i$ be known, i.e., past work has defined $\mathcal{D} \coloneqq \{\{H_i, \pib^{(i)}\}\}_{i=0}^{m-1}$ where $\pib^{(i)}$ is the behavior policy that generated the $i^\text{th}$ trajectory.
%
%This definition of $\mathcal{D}$ may be unrealizable when the available trajectories have been generated by human demonstration or the behavior policies are not logged.
%
%We will assume that $\mathcal{D}$ was generated by a set of Markovian policies, i.e., actions in $\mathcal{D}$ are independent of past states and actions given the immediate preceding state.

\subsection{Importance Sampling}

\textit{Importance Sampling} (IS) is a method for reweighting returns generated by a \emph{behavior} policy, $\pib$, such that they are unbiased returns from the \emph{evaluation} policy.
Given a set of $m$ trajectories and the policy that generated each trajectory, the IS off-policy estimate of $v(\pieval)$ is:
\begin{equation}\label{eq:OIS}
\operatorname{IS}(\pieval, \mathcal{D}) \coloneqq \frac{1}{m}\sum_{i=1}^{m} g(H^{(i)}) \prod_{t=0}^{L-1} \frac{\pi_e(A_t^{(i)} | S_t^{(i)})}{\pib^{(i)}(A_t^{(i)} | S_t^{(i)})}.
\end{equation}
We refer to (\ref{eq:OIS}) -- that uses the true behavior policy -- as the ordinary importance sampling ($\OIS$) estimator and refer to $\frac{\pieval(A | S)}{\pib(A | S)}$ as the OIS weight for action $A$ in state $S$.

The importance sampling estimator with $\OIS$ weights can be understood as a Monte Carlo estimate of $v(\pieval)$ with a correction for the distribution shift caused by sampling trajectories from $\pib$ instead of $\pieval$.
As more data is obtained, the empirical frequency of any trajectory approaches the expected frequency under $\pib$ and then the $\OIS$ weight corrects the weighting of each trajectory to reflect the expected frequency under $\pieval$.

%Ordinary importance sampling allows off-policy data to be used as if it were on-policy.
%
%Unfortunately, computing the weighting factor for each return requires that the behavior policy that generated each trajectory is known.
%
%While the OIS estimator itself may suffer from high variance, OIS weights are used by a number of estimators that typically yield much better performance, e.g., weighted importance sampling \cite{precup2000eligibility}, per-decision importance sampling \cite{precup2000eligibility}, the doubly robust estimator \cite{jiang2016doubly,thomas2016data-efficient}, and the MAGIC estimator \cite{thomas2016data-efficient}.

\section{Sampling Error in Importance Sampling\label{sec:objections}}

The ordinary importance sampling estimator (\ref{eq:OIS}) is known to have high variance.
A number of importance sampling variants have been proposed to address this problem, however, all such variants use the $\OIS$ weight.
The common reliance on $\OIS$ weights suggest that an implicit assumption in the RL community is that $\OIS$ weights lead to the most accurate estimate.
%While much research has gone into lowering the variance of importance-sampling for off-policy evaluation, one aspect of estimation that is rarely questioned is whether the ordinary importance weight is the correct way to re-weight off-policy data.
%
Hence, when an application requires estimating an unknown $\pib$ in order to compute importance weights, the application is implicitly assumed to only be approximating the desired weights.

However, $\OIS$ weights themselves are sub-optimal in at least one respect: the weight of each trajectory in the $\OIS$ estimate is inaccurate unless we happen to observe each trajectory according to its true probability.
%
%Figure \ref{fig:sampling} demonstrates this phenomenon, which we call \textit{sampling error} in a discrete MDP.
%\begin{figure}[h!]
%\centering
%\includegraphics[scale=0.4]{figs/sampling-error}
%\caption{Sampling error in a discrete environment with three possible trajectories. Trajectories are sampled $\iid$ with the given probabilities and are observed in the given proportion. An $\OIS$ evaluation will place too much weight on Trajectory C and too little weight on Trajectory B. \label{fig:sampling}}
%\end{figure}
When the empirical frequency of any trajectory is unequal to its expected frequency under $\pib$, the $\OIS$ estimator puts either too much or too little weight on the trajectory.
We refer to error due to some trajectories being either over- or under-represented in $\mathcal{D}$ as \textit{sampling error}.
%
%However, when our objective is a minimal MSE estimate and we only have a finite number of trajectories, $\OIS$ weights are themselves sub-optimal unless we happen to have observed each trajectory according to its expected frequency under $\pib$.
%%
%When the empirical frequency of any trajectory is unequal to the expected frequency under $\pib$, trajectories in our sample are either over- or under-represented and thus the $\OIS$ estimator puts either too much or too little weight on each trajectory.
%% 
%We will refer to error due to some trajectories being either over- or under-represented in $\mathcal{D}$ as \textit{sampling error}.
%
%
Sampling error may be unavoidable when we desire an unbiased estimate of $v(\pieval)$. 
However, correcting for it by properly weighting trajectories will, in principle, give us a lower mean squared error estimate. 

The problem of sampling error is related to a Bayesian objection to Monte Carlo integration techniques: $\OIS$ ignores information about the closeness of trajectories in $\mathcal{D}$ \cite{hagan1987monte,ghahramani2003bayesian}.
This objection is easiest to understand in deterministic and discrete environments though it also holds for stochastic and continuous environments.
In a deterministic environment, additional samples of any trajectory, $h$, provide no new information about $v(\pieval)$ since only a single sample of $h$ is required to know $g(h)$.
However, the more times a particular trajectory appears, the more weight it receives in an $\OIS$ estimate even though the correct weighting of $g(h)$, $\Pr(h | \pieval)$, is known since $\pieval$ is known.
In stochastic environments, it is reasonable to give more weight to recurring trajectories since the recurrence provides additional information about the unknown state-transition and reward probabilities.
However, ordinary importance sampling also relies on sampling to approximate the known policy probabilities.

Finally, we note that the problem of sampling error applies to any variant of importance sampling using $\OIS$ weights, e.g., weighted importance sampling \cite{precup2000eligibility}, per-decision importance sampling \cite{precup2000eligibility}, the doubly robust estimator \cite{jiang2016doubly,thomas2016data-efficient}, and the MAGIC estimator \cite{thomas2016data-efficient}.
Sampling error is also a problem for on-policy Monte Carlo policy evaluation since Monte Carlo is the special case of $\OIS$ when the behavior policy is the same as the evaluation policy.

\section{Regression Importance Sampling} 
\label{sec:RIS}

In this section we introduce the primary focus of our work: a family of estimators called regression importance sampling ($\RIS$) estimators that correct for sampling error in $\mathcal{D}$ by importance sampling with an estimated behavior policy.
%
%We introduce $\RIS$ as a family of estimators but primarily discuss the $\RIS(0)$ estimator that estimates the probability of actions conditioned on the state and the $\RIS(L-1)$ estimator that estimates the probability of actions conditioned on the entire state and action history.
%
The motivation for this approach is that, though $\mathcal{D}$ was sampled with $\pib$, the trajectories in $\mathcal{D}$ may appear as if they had been generated by a different policy, $\pidata$.
For example, if $\pib$ would choose between two actions with equal probability in a particular state, the data might show that one action was selected more often than the other in that state.
Thus instead of using $\OIS$ to correct from $\pib$ to $\pieval$, we introduce $\RIS$ that corrects from $\pidata$ to $\pieval$.

We assume that, in addition to $\mathcal{D}$, we are given a policy class -- a set of policies -- $\Pi^n$ where each $\pi \in \Pi^n$ is a distribution over actions conditioned on an $n$-step state-action history: $\pi: \mathcal{S}^{n+1} \times \mathcal{A}^{n} \rightarrow [0, 1]$.
Let $H_{t-n:t}$ be the trajectory segment: $S_{t-n}, A_{t-n}, ... S_{t-1}, A_{t-1}, S_t$ where if $t - n < 0$ then $H_{t-n:t}$ denotes the beginning of the trajectory until step $t$.
The $\RIS(n)$ estimator first estimates the maximum likelihood behavior policy in $\Pi^n$ given $\mathcal{D}$:
\begin{equation}\label{eq:RIS}
\pidata^{(n)} := \argmax_{\pi \in \Pi^n} \sum_{H \in \mathcal{D}} \sum_{t=0}^{L-1} \log \pi(a | H_{t-n:t}). 
\end{equation}
The $\RIS(n)$ estimate is then the importance sampling estimate with $\pidata^{(n)}$ replacing $\pib$:
\[ \RIS(n)(\pieval, \mathcal{D}):= \frac{1}{m} \sum_{i=1}^{m} g(H_i) \prod_{t=0}^{L-1} \frac{\pieval(A_t | S_t)}{\pidata^{(n)}(A_t | H_{t-n:t})} \]
Analogously to $\OIS$, we refer to $\frac{\pieval(A_t | S_t)}{\pidata^{(n)}(S_t | H_{t-n:t})}$ as the $\RIS(n)$ weight for action $A_t$, state $S_t$, and trajectory segment $H_{t-n:t}$.
Note that the $\RIS(n)$ weights are always well-defined since $\pidata^{(n)}$ never places zero probability mass on any action that occurred in $\mathcal{D}$.

%\subsection{Correcting Importance Sampling Sampling Error}

%In Section \ref{sec:objections}, we described how ordinary importance sampling estimates may be inaccurate due to sampling error, i.e., trajectories in $\mathcal{D}$ appearing above or below their expected frequencies.
%
%We now discuss how $\RIS$ corrects for sampling error.
%
%We now present an example illustrating how $\RIS$ corrects for sampling error.
%
%Intuitively, if we observe a trajectory at a higher frequency than its true probability the actions along that trajectory have higher probability under $\pidata$ and the $\RIS$ weights down-weight the return to correct for it being over-sampled.

\subsection{Correcting Importance Sampling Sampling Error}\label{sec:example}

We now present an example illustrating how $\RIS$ corrects for sampling error in off-policy data.

Consider a deterministic MDP with finite $|\mathcal{S}|$ and $|\mathcal{A}|$. Let $\mathcal{H}$ be the (finite) set of possible trajectories under $\pib$ and suppose that our observed data, $\mathcal{D}$, contains at least one of each $h \in \mathcal{H}$.
In this setting, the maximum likelihood behavior policy can be computed with count-based estimates.
We define $c(h_{i:j})$ as the number of times that trajectory segment $h_{i:j}$ appears during any trajectory in $\mathcal{D}$. 
Similarly, we define $c(h_{i:j},a)$ as the number of times that action $a$ is observed following trajectory segment $h_{i:j}$ during any trajectory in $\mathcal{D}$.
$\RIS(n)$ estimates the behavior policy as: \[\pidata(a | h_{i-n:i}) := \frac{c(h_{i-n:i},a)}{c(h_{i-n:i})}.\]

%The sampling error correction may be more accurate when we estimate $\pidata$ that conditions on trajectory segments (i.e., $\RIS(n)$ with $n>0$) \emph{even when the true behavior policy only conditions on the preceding state}.
%%
%We present this analysis in a limited setting in which we assume deterministic MDPs with finite $|\mathcal{S}|$ and $|\mathcal{A}|$ and that $\mathcal{D}$ contains all state-action trajectories possible under $\pib$.
%%
%In this setting, we show that $\RIS(L-1)$ exactly corrects for the sampling error of $\OIS$ while $\RIS(n)$ with $n < L -1 $ can only approximately correct for it.
%%
%We discuss these limitations below. In Appendix D we provide a different intuition for sampling error in continuous action spaces.

%We define $c(h_{i:j})$ as the number of times that trajectory segment $h_{i:j}$ appears during any trajectory in $\mathcal{D}$. 
%%
%Similarly, we define $c(h_{i-n:i},a)$ as the number of times that action $a$ is observed following length $n$ trajectory segment $h_{i-n:i}$ during any trajectory in $\mathcal{D}$.
%%
%For this discrete setting, $\RIS(n)$ uses the behavior policy: \[\pidata(a | h_{i-n:i}) := \frac{c(h_{i-n:i},a)}{c(h_{i-n:i})}.\]

%In this setting, we can show that $\RIS(L-1)$ exactly corrects for the sampling error of $\OIS$ while $\RIS(n)$ with $n < L -1 $ can only approximately correct for it.
%
Observe that both $\OIS$ and all variants of $\RIS$ can be written in one of two forms:
\[ \underbrace{\frac{1}{m} \sum_{i=1}^m \frac{w_\pieval(h_i)}{w_\pi(h_i)} g(h_i)}_{(i)} = \underbrace{\sum_{h \in \mathcal{H}} \frac{c(h)}{m} \frac{w_\pieval(h)}{w_\pi(h)} g(h)}_{(ii)} \]
where $w_\pi(h)$ = $\prod_{t=0}^{L-1}\pi(a_t|s_t)$ and for $\OIS$ $\pi \coloneqq \pib$ and for $\RIS(n)$ $\pi \coloneqq \pidata^{(n)}$ as defined in Equation (\ref{eq:RIS}).

If we had sampled trajectories using $\pidata^{(L-1)}$ instead of $\pib$, in our deterministic environment, the probability of each trajectory would be $\Pr(H | \pidata^{(L-1)}) = \frac{c(H)}{m}$.
Thus Form (ii) can be written as:
\[
\mathbf{E}\left[\frac{w_\pieval(H)}{w_\pi(H)} g(H) | H \sim \pidata^{(L - 1)}\right].
\]

To emphasize what we have shown so far: $\OIS$ and $\RIS$ are both sample-average estimators whose estimates can be written as exact expectations.
However, this exact expectation is under the distribution that trajectories were observed and \textit{not} the distribution of trajectories under $\pib$.

Consider choosing $w_\pi:=w_\pidata^{(L-1)}$ as $\RIS(L-1)$ does.
This choice results in (ii) being exactly equal to $v(\pieval)$\footnote{This statement follows from the importance sampling identity: $\mathbf{E}[\frac{\Pr(H | \pieval)}{\Pr(H | \pi)} g(h) | H \sim \pi] = \mathbf{E}[g(H) | H \sim \pieval] = v(\pieval)$ and the fact that we have assumed a deterministic environment.}
On the other hand, choosing $w_\pi:=w_\pib$ will \textit{not} return $v(\pieval)$ unless we happen to observe each trajectory at its expected frequency (i.e., $\pidata^{(L-1)} = \pib$).

%Form (ii) can be written as $\mathbf{E}[\frac{w_\pieval(H)}{w_\pi(H)} g(H) | H \sim \pidata^{(L - 1)}]$ since (in our deterministic example):
%%
%\begin{align*}
%\Pr(H | \pidata^{(L-1)}) =& \prod_{t=0}^{L-1} \pi^{(L-1)}(A_t | H_{0: t}) \\
%=& \prod_{t=0}^{L-1} \frac{c(H_{0:t}, A_t)}{c(H_{0: t})} = \frac{c(H)}{m}.
%\end{align*}
%%\Pr(H | pidata^{(L-1)} = \prod_{t=0}^L pi^{(L-1)}(A_t | H_{t - L + 1: t}) = \prod_{t=0}^L \frac{c(H_{0:t}, A_t)}{c(H_{0: t})} = \frac{c(H)}{m}
%%
%%the probability of a trajectory under $\pidata^{(L - 1)}$ is $ \frac{c(h)}{n}$.
%%
%Choosing $w_\pi:=w_\pidata^{(L-1)}$ results in (ii) being equal to $v(\pieval)$.\footnote{This statement follows from the importance sampling identity: $\mathbf{E}[\frac{\Pr(H | \pieval)}{\Pr(H | \pi)} g(h) | H \sim \pi] = \mathbf{E}[g(H) | H \sim \pieval] = v(\pieval)$ and the fact that we have assumed a deterministic environment.}
%%
%Thus, once all trajectories possible under $\pib$ have been observed, \emph{regardless of their frequency}, the $\RIS(L-1)$ estimator gives a zero MSE estimate. 
%%
%This ideal evaluation can be understood as $\RIS(L-1)$ providing the optimal correction for sampling error: $\RIS(L-1)$ first divides by the exact empirical frequency to remove the approximate weight from sampling and then applies the correct weighting by multiplying by  $w_\pieval$.
%%
%In contrast, $\OIS$ must observe all trajectories \emph{at their expected frequency} to return a zero MSE estimate.

Choosing $w_\pi$  to be $w_{\pidata^{(n)}}$ for $n < L - 1$ also does \textit{not} result in $v(\pieval)$ being returned in this example.
This observation is surprising because even though we know that the true $\Pr(h | \pib) = \prod_{t=0}^{L-1} \pib(a_t | s_t)$, it does not follow that the estimated probability of a trajectory is equal to the product of the estimated Markovian action probabilities, i.e., that $\frac{c(h)}{m} = \prod_{t=0}^{L-1} \pidata^{(0)}(a_t | s_t)$.
With a finite number of samples, the data may have higher likelihood under a non-Markovian behavior policy -- possibly even a policy that conditions on all past states and actions.
Thus, to fully correct for sampling error, we must importance sample with an estimated non-Markovian behavior policy.
%Since $\RIS(L-1)$ learns behavior policies that condition on the entire trajectory, it is able to learn a policy that captures the statistics of $\mathcal{D}$.
%
However, $w_{\pidata^{(n)}}$ with $n < L - 1$ still provides a better sampling error correction than $w_\pib$ since any $\pidata^{(n)}$ will reflect the statistics of $\mathcal{D}$ while $\pib$ does not. 
This statement is supported by our empirical results comparing $\RIS(0)$ to $\OIS$ and a theoretical result we present in the following section that states that $\RIS(n)$ has lower asymptotic variance than $\OIS$ for all $n$.

Before concluding this section, we discuss two limitations of the presented example -- these limitations are \textit{not} present in our theoretical or empirical results.
First, the example lacks stochasticity in the rewards and transitions.
In stochastic environments, sampling error arises from sampling states, actions, and rewards while in deterministic environments, sampling error only arises from sampling actions.
Neither $\RIS$ nor $\OIS$ can correct for state and reward sampling error since such a correction requires knowledge of what the true state and reward frequencies are and these quantities are typically unknown in the MDP policy evaluation setting.

Second, we assumed that $\mathcal{D}$ contains at least one of each trajectory possible under $\pib$.
If a trajectory is absent from $\mathcal{D}$ then $\RIS(L-1)$ has non-zero bias.
%
%As $L$, $|\mathcal{S}|$, and $|\mathcal{A}|$ grows the total number of trajectories
% required before we have seen all trajectories grows exponentially and thus 
%
Theoretical analysis of this bias for both $\RIS(L-1)$ and other $\RIS$ variants is an open question for future analysis.
%In our experiments we show that the higher bias incurred by $\RIS(L-1)$ makes smaller $n$ preferable in the small sample size setting.
%%
%Theoretical analysis of this bias for both $\RIS(L-1)$ and other $\RIS$ variants is an open question for future analysis.

\subsection{Theoretical Properties of RIS}\label{sec:RIS:theory}

%Our primary contribution is an empirical evaluation of $\RIS$ compared to $\OIS$.

%Prior theoretical analysis of importance sampling with an estimated behavior policy has made the assumption that $\pidata$ is estimated independently of $\mathcal{D}$ \cite{dudik2011doubly,farajtabar2018more}.
%%
%This assumption simplifies the theoretical analysis but unfortunately makes it inapplicable to regression importance sampling.
%
%For this reason, our primary contribution is an empirical evaluation of $\RIS$ compared to $\OIS$.
%
Here, we briefly summarize new theoretical results (full proofs appear in the appendices) as well as a connection to prior work from the multi-armed bandit literature:
\begin{itemize}
\item \textbf{Proposition 1:} For all $n$, $\RIS(n)$ is a biased estimator, however, it is consistent provided $\pib \in \Pi^n$ (see Appendix A for a full proof).
\item \textbf{Corollary 1:} For all $n$, if $\pib \in \Pi^n$ then $\RIS$ has asymptotic variance at most that of $\OIS$. This result is a corollary to a result by Henmi et al.\ \yrcite{henmi2007importance} for general Monte Carlo integration (see Appendix B for a full proof). We highlight that the derivation of this result includes some $o(n)$ and $o_p(1)$ terms that may be large for small sample sizes; the lower variance is asymptotic and we leave analysis of the finite-sample variance of $\RIS$ to future work.
\item \textbf{Connection to REG:} For finite MDPs, Li et al.\ \yrcite{li2015toward} introduce the \textit{regression} ($\REG$) estimator and show it has asymptotic lower minimax MSE than $\OIS$ provided the estimator has full knowledge of the environment's transition probabilities. With this knowledge $\REG$ can correct for sampling error in both the actions and state transitions. $\RIS(L-1)$ is an approximation to $\REG$ that only corrects for sampling error in the actions. The derivation of the connection between $\REG$ and $\RIS(L-1)$ is given in Appendix C.
\end{itemize}

We also note that prior theoretical analysis of importance sampling with an estimated behavior policy has made the assumption that $\pidata$ is estimated independently of $\mathcal{D}$ \cite{dudik2011doubly,farajtabar2018more}.
This assumption simplifies the theoretical analysis but makes it inapplicable to regression importance sampling.

\subsection{RIS with Function Approximation}

The example in Section \ref{sec:example} presented $\RIS$ with count-based estimation of $\pidata$.
In many practical settings, count-based estimation of $\pidata$ is intractable and we must rely on function approximation.
For example, in our final experiments we learn $\pidata$ as a Gaussian distribution over actions with the mean given by a neural network.
Two practical concerns arise when using function approximation for $\RIS$: avoiding over-fitting and selecting the function approximator.

$\RIS$ uses all of the data available for off-policy evaluation to both estimate $\pidata$ and compute the off-policy estimate of $v(\pieval)$.
Unfortunately, the $\RIS$ estimate may suffer from high variance if the function approximator is too expressive and $\pidata$ is over-fit to our data.
Additionally, if the policy class of $\pib$ is unknown, it may be unclear what is the right function approximation representation for $\pidata$.
A practical solution is to use a validation set -- distinct from $\mathcal{D}$ -- to select an appropriate policy class and appropriate regularization criteria for $\RIS$.
This solution is a small departure from the previous definition of $\RIS$ as selecting $\pidata$ to maximize the log likelihood on $\mathcal{D}$.
Rather, we select $\pidata$ to maximize the log likelihood on $\mathcal{D}$ while avoiding over-fitting.
This approach represents a trade-off between robust empirical performance and potentially better but more sensitive estimation with $\RIS$.

\section{Empirical Results}

We present an empirical study of the $\RIS$ estimator across several policy evaluation tasks.
Our experiments are designed to answer the following questions:
\begin{enumerate}
\setlength{\itemsep}{0.5pt}
\item What is the empirical effect of replacing $\OIS$ weights with $\RIS$ weights in sequential decision making tasks?
\item How important is using $\mathcal{D}$ to both estimate the behavior policy and compute the importance sampling estimate?
\item How does the choice of $n$ affect the $\mse$ of $\RIS(n)$?
\end{enumerate}

With non-linear function approximation, our results suggest that the standard supervised learning approach of model selection using hold-out validation loss may be sub-optimal for the regression importance sampling estimator.
Thus, we also investigate the question:

\begin{enumerate}
\setcounter{enumi}{3}
\item Does minimizing hold-out validation loss set yield the minimal MSE regression importance sampling estimator when estimating $\pidata$ with gradient descent and neural network function approximation?
\end{enumerate}

\subsection{Empirical Set-up}

We run policy evaluation experiments in several domains.
We provide a short description of each domain here; a complete description and additional experimental details are given in Appendix E.\footnote{Code is provided at \url{https://github.com/LARG/regression-importance-sampling}.}

\begin{itemize}
\setlength{\itemsep}{0pt}
\item \textbf{Gridworld:} This domain is a $4 \times 4$ Gridworld used in prior off-policy evaluation research \cite{thomas2016data-efficient,hanna2017data-efficient}.
$\RIS$ uses count-based estimation of $\pib$.
This domain allows us to study $\RIS$ separately from questions of function approximation.
\item \textbf{SinglePath:} See Figure \ref{fig:singlepath} for a description. This domain is small enough to allow implementations of $\RIS(L-1)$ and the $\REG$ method from Li et al.\ \yrcite{li2015toward}.
All $\RIS$ methods use count-based estimation of $\pib$.
\item \textbf{Linear Dynamical System:} This domain is a point-mass agent moving towards a goal in a two dimensional world by setting $x$ and $y$ acceleration. Policies are linear in a second order polynomial transform of the state features. We estimate $\pidata$ with least squares.
\item \textbf{Simulated Robotics:} We also use two continuous control tasks from the OpenAI gym: Hopper and HalfCheetah.\footnote{For these tasks we use the Roboschool versions: \url{https://github.com/openai/roboschool}}
In each task, we use neural network policies with $2$ layers of $64$ $\tanh$ hidden units each for $\pieval$ and $\pib$.
\end{itemize}

\begin{figure}
\centering
\begin{tikzpicture}[-latex ,auto ,node distance =1.75 cm and 2.75cm ,
semithick ,
state/.style ={ circle ,top color =white , bottom color = white,
draw,black , text=black , minimum width =1 cm}, inf/.style = {}]
\node[state] (start) at (0.0, 0.0){$s_0$};
\node[state] (next) at (2.0, 0.0) {$s_1$};
\node[inf] (dots) at (4.0, 0.0) {...};
\node[state] (end) at (6.0, 0.0) {$s_5$};
%\node[inf] (end_inf) at (6.0, 0.0) {};
%\node[state] (big_reward) at (4.5,0.75){$-100$};
%\node[state] (small_reward)  at (4.5, -0.75) {$-1$};
\path (start) edge [bend left] node[above=0.1 cm,align=left] {$a_0$} (next);
\path (start) edge [bend right] node[below=0.1 cm,align=left] {$a_1$} (next);
%\path (start) edge [bend left] node[below=0.01 cm,align=left] {$a_0$} (start);
%\path (start) edge [bend right] node[above=0.01 cm,align=left] {$a_1$} (next);
\path (next) edge [bend left] node[above=0.1 cm,align=left] {$a_0$} (dots);
\path (next) edge [bend right] node[below=0.1 cm,align=left] {$a_1$} (dots);
\path (dots) edge [bend left] node[above=0.1 cm,align=left] {$a_0$} (end);
\path (dots) edge [bend right] node[below=0.1 cm,align=left] {$a_1$} (end);
%\path (end) edge [bend left] node[above=0.1 cm,align=left] {$a_0$} (end_inf);
%\path (end) edge [bend right] node[below=0.1 cm,align=left] {$a_1$} (end_inf);
\end{tikzpicture}
\caption{The SinglePath MDP. This environment has $5$ states, $2$ actions, and $L=5$. The agent begins in state $0$ and both actions either take the agent from state $n$ to state $n+1$ or cause the agent to remain in state $n$. \textbf{Not shown:} If the agent takes action $a_1$ it remains in its current state with probability $0.5$.}
\label{fig:singlepath}
\end{figure}
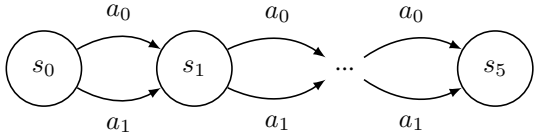

\subsection{Empirical Results}

We now present our empirical results.
Except where specified otherwise, $\RIS$ refers to $\RIS(0)$.

\paragraph{Finite MDP Policy Evaluation}

Our first experiment compares several importance sampling variants implemented with both $\RIS$ weights and $\OIS$ weights.
Specifically, we use the basic IS method described in Section 2, the \textit{weighted} IS estimator \cite{precup2000eligibility}, and the \textit{weighted doubly robust} estimator \cite{thomas2016data-efficient}.

%Our first experiment compares regression importance sampling (RIS) with ordinary importance sampling (OIS) in the Gridworld domain.
%%
%In addition to RIS and OIS, we also include two extensions of importance sampling: \emph{weighted} importance sampling, and the \emph{weighted doubly robust} estimator.
%%
%Weighted importance sampling (WIS) is often observed to improve over ordinary importance sampling and weighted doubly robust is a more recently proposed method that has been show to improve over both WIS and IS.
%%
%Since these methods use importance weights, we implement them with both OIS weights and RIS weights.
%%
%\footnote{Full details of these methods are available in Appendix E.}

%
%We also implement \emph{per-decision} importance sampling and the ordinary \emph{doubly robust} estimator but defer these results to Appendix E for clarity.
%
Figure \ref{fig:gridworld} shows the $\mse$ of the evaluated methods averaged over $100$ trials.
The results show that using $\RIS$ weights improves all IS variants relative to $\OIS$ weights.\footnote{We also implemented and evaluated \emph{per-decision} importance sampling and the ordinary \emph{doubly robust} estimator and saw similar results. However we defer these results to Appendix F for clarity.}

We also evaluate alternative data sources for estimating $\pidata$ in order to establish the importance of using $\mathcal{D}$ to both estimate $\pidata$ and compute the value estimate.
Specifically, we consider:
\begin{enumerate}
\setlength{\itemsep}{0pt}
\item \textbf{Independent Estimate}: In addition to $\mathcal{D}$, this method has access to an additional set, $\Dtrain$. The behavior policy is estimated with $\Dtrain$ and the policy value estimate is computed with $\mathcal{D}$. Since $(s,a)$ pairs in $\mathcal{D}$ may be absent from $\Dtrain$ we use Laplace smoothing to ensure that the importance weights are well-defined.
\item \textbf{Extra-data Estimate}: This baseline is the same as \textbf{Independent Estimate} except it uses both $\Dtrain$ and $\mathcal{D}$ to estimate $\pib$. Only $\mathcal{D}$ is used to compute the policy value estimate.
\end{enumerate}
Figure \ref{fig:gridworld-alt} shows that these alternative data sources for estimating $\pib$ decrease accuracy compared to $\RIS$ and $\OIS$.
\textbf{Independent Estimate} has high MSE when the sample size is small but its MSE approaches that of $\OIS$ as the sample size grows.
We understand this result as showing that this baseline cannot correct for sampling error in the off-policy data since the behavior policy estimate is unrelated to the data used in the off-policy evaluation.
\textbf{Extra-data Estimate} initially has high MSE but its MSE decreases faster than that of $\OIS$.
Since this baseline estimates $\pib$ with data that includes $\mathcal{D}$, it can partially correct for sampling error -- though the extra data harms its ability to do so.
Only estimating $\pidata$ with $\mathcal{D}$ and $\mathcal{D}$ alone improves performance over $\OIS$ for all sample sizes.

We also repeat these experiments for the on-policy setting and present results in Figure \ref{fig:gridworld-onpol} and Figure \ref{fig:gridworld-onpol-alt}.
We observe similar trends as in the off-policy experiments suggesting that $\RIS$ can lower variance in Monte Carlo sampling methods even when $\OIS$ weights are otherwise unnecessary.

\begin{figure}[t]
\centering
%\subfigure[Gridworld]{\includegraphics[width=0.48\linewidth]{}\label{fig:gridworld}}
%\subfigure[Gridworld Alt.]{\includegraphics[width=0.48\linewidth]{figs/ope-alternatives}\label{fig:gridworld-alt}}
%\subfigure[Gridworld On-Policy]{\includegraphics[width=0.48\linewidth]{figs/onpol-baselines}\label{fig:gridworld-onpol}}
%\subfigure[Gridworld On-Policy Alt.]{\includegraphics[width=0.48\linewidth]{figs/onpol-alternatives}\label{fig:gridworld-onpol-alt}}
\subfigure[Gridworld]{\includegraphics[width=0.48\linewidth]{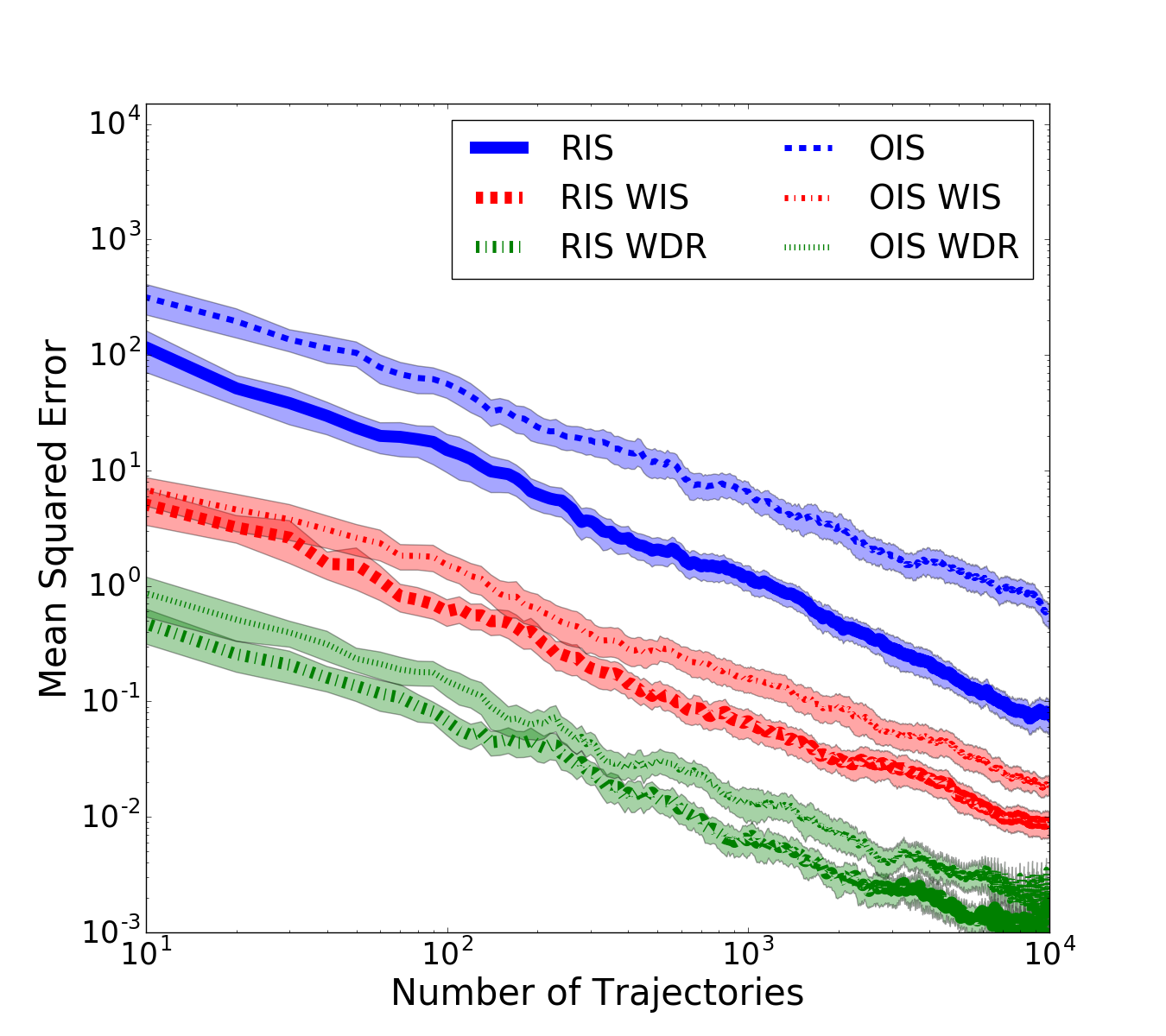}\label{fig:gridworld}}
\subfigure[Gridworld Alt.]{\includegraphics[width=0.48\linewidth]{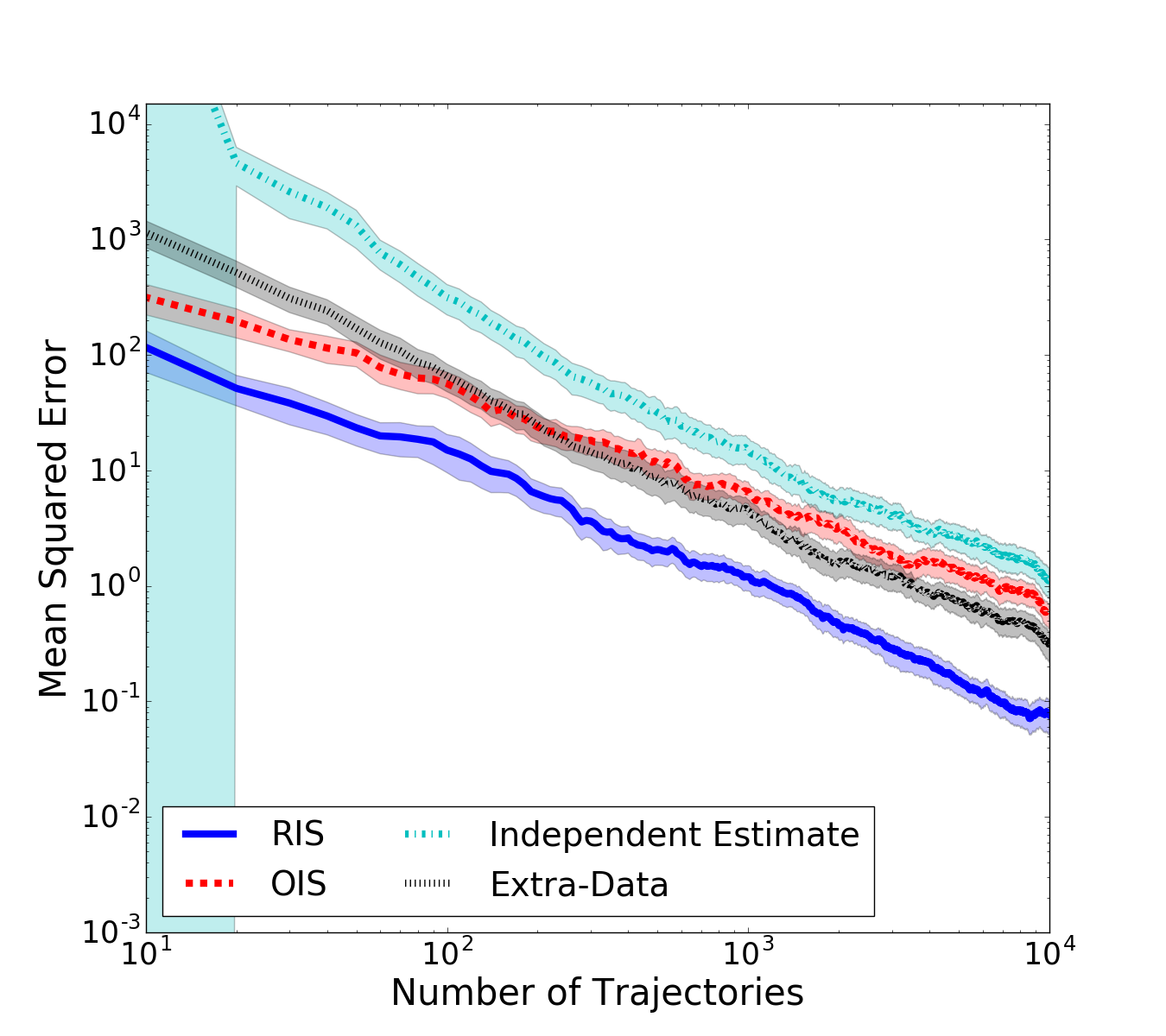}\label{fig:gridworld-alt}}
\subfigure[Gridworld On-Policy]{\includegraphics[width=0.48\linewidth]{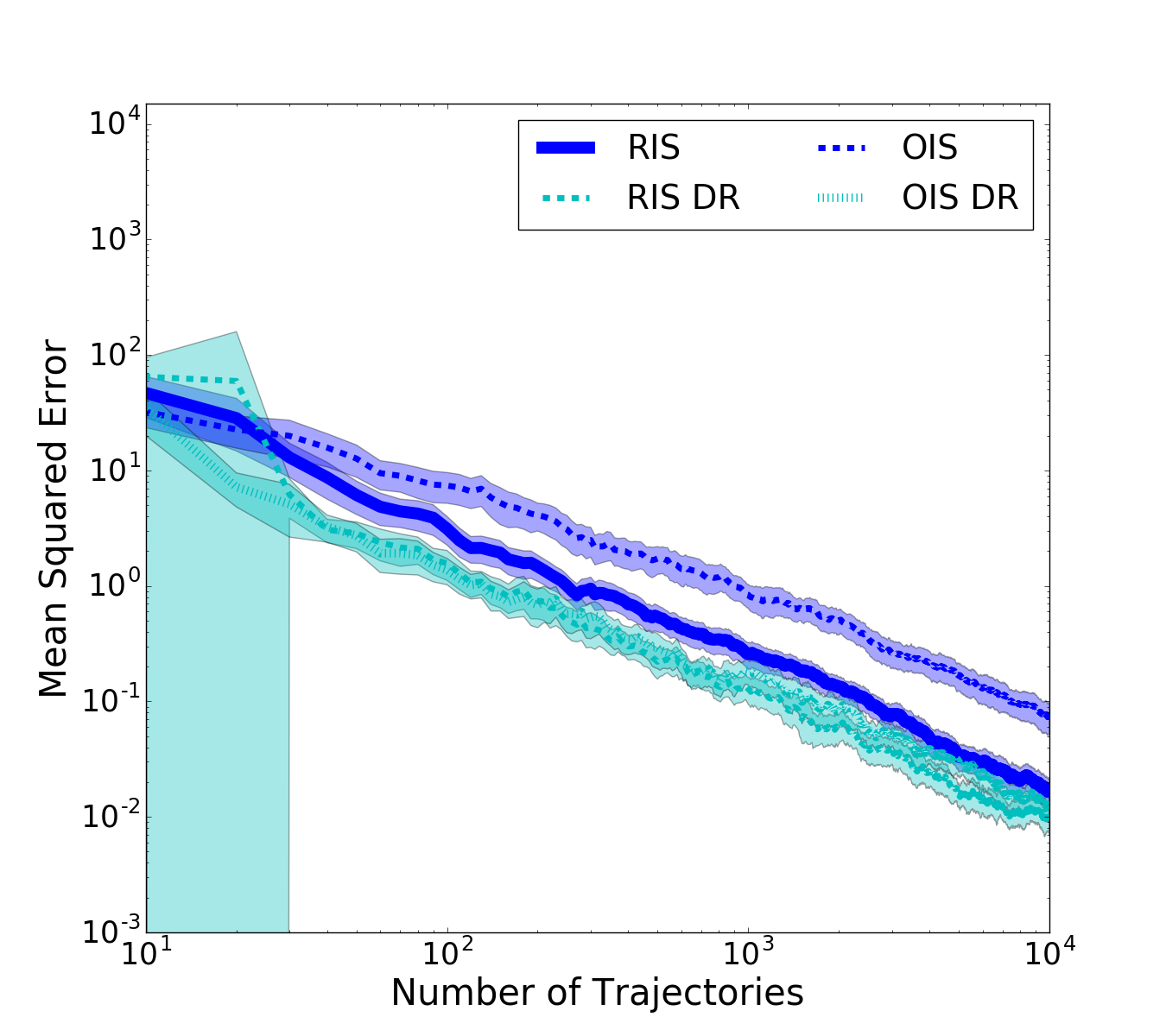}\label{fig:gridworld-onpol}}
\subfigure[Gridworld On-Policy Alt.]{\includegraphics[width=0.48\linewidth]{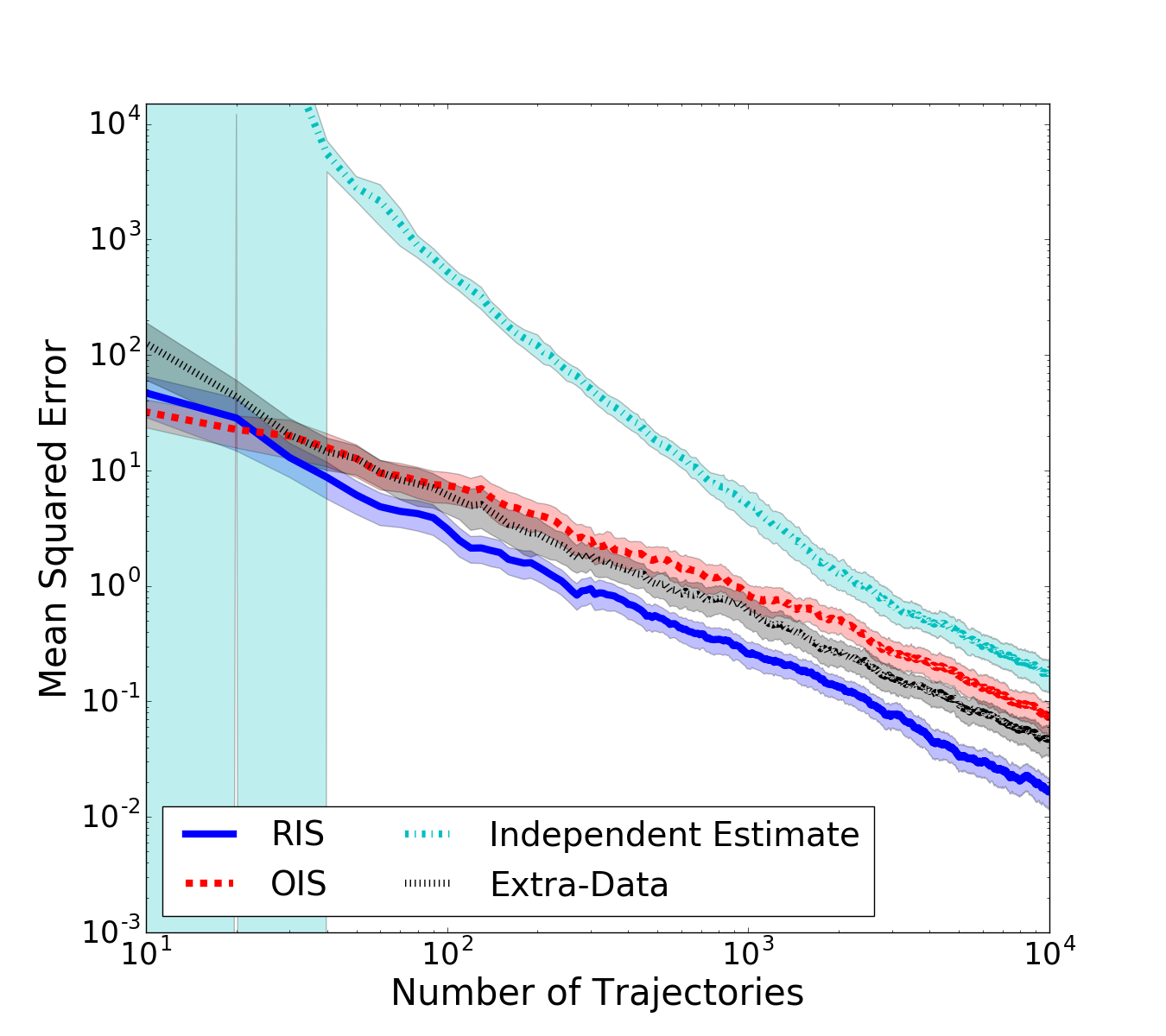}\label{fig:gridworld-onpol-alt}}
%\subfigure[SinglePath]{\includegraphics[scale=0.1]{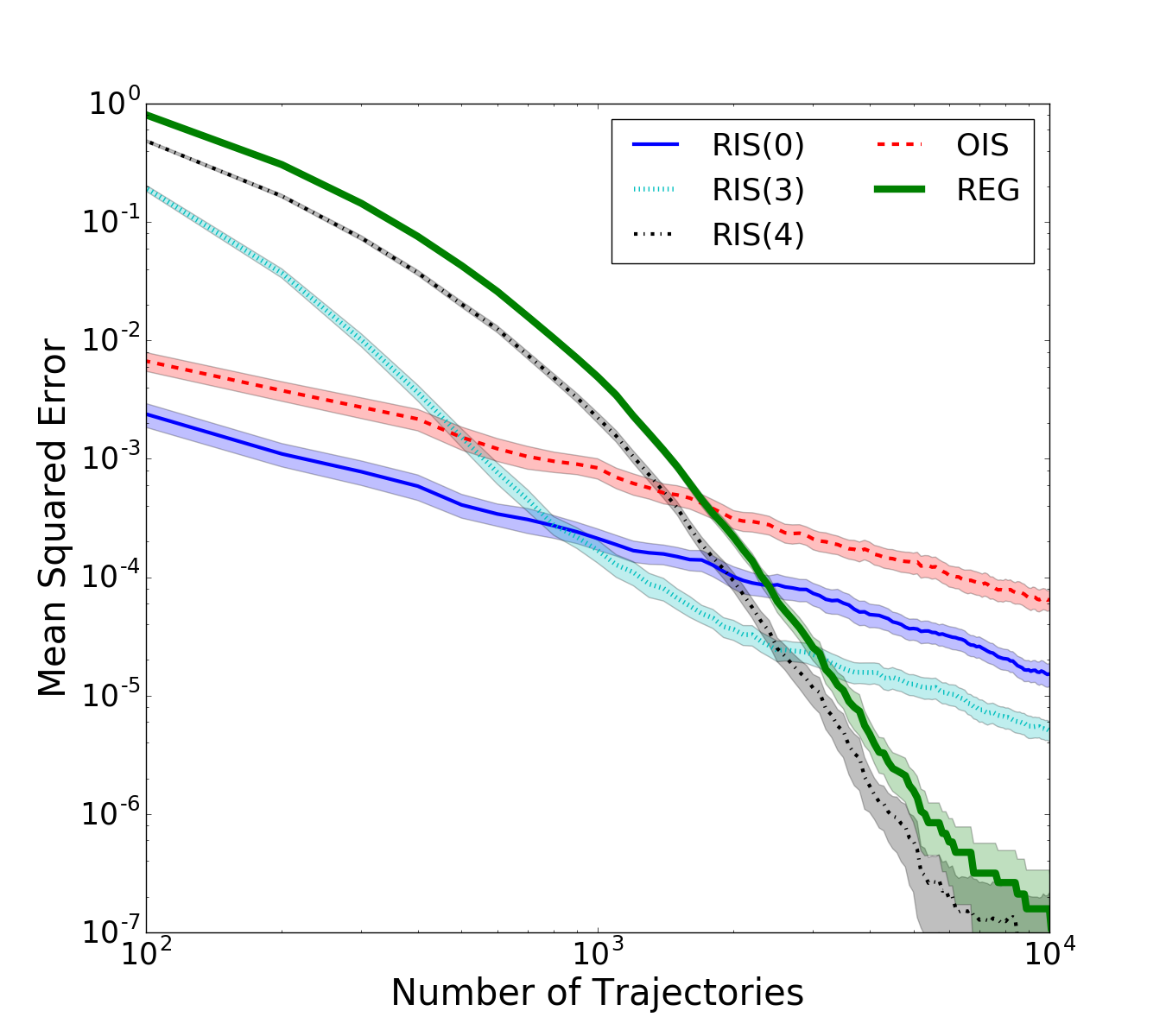}\label{fig:sp_results}}
\caption{Gridworld policy evaluation results. In all subfigures, the x-axis is the number of trajectories collected and the y-axis is mean squared error. Axes are log-scaled. The shaded region gives a 95\% confidence interval. (a) Gridworld Off-policy Evaluation: The main point of comparison is the $\RIS$ variant of each method to the $\OIS$ variant of each method.
(b) Gridworld $\pidata$ Estimation Alternatives: This plot compares $\RIS$ and $\OIS$ to two methods that replace the true behavior policy with estimates from data sources other than $\mathcal{D}$.
Subfigures (c) and (d) repeat experiments (a) and (b) with the behavior policy from (c) and (d) as the evaluation policy. 
\label{fig:finite}}

\end{figure}

\begin{figure}[t]
\centering
\includegraphics[width=0.7\linewidth]{}
\caption{Off-policy evaluation in the SinglePath MDP for various $n$. The curves for $\REG$ and $\RIS(4)$ have been cut-off to more clearly show all methods. These methods converge to an MSE value of approximately $1 \times 10^{-31}$.
\label{fig:singlepath-results}}

\end{figure}

\paragraph{RIS(n)}

In the Gridworld domain it is difficult to observe the performance of $\RIS(n)$ for various $n$ because of the long horizon: smaller $n$ perform similarly and larger $n$ scale poorly with $L$.
To see the effects of different $n$ more clearly, we use the SinglePath domain.
Figure \ref{fig:singlepath-results} gives the mean squared error for $\OIS$, $\RIS$, and the $\REG$ estimator of Li et al.\ \yrcite{li2015toward} that has full access to the environment's transition probabilities.
For $\RIS$, we use $n=0, 3, 4$ and each method is ran for $200$ trials.

Figure \ref{fig:singlepath-results} shows that higher values of $n$ and $\REG$ tend to give inaccurate estimates when the sample size is small.
However, as data increases, these methods give increasingly accurate value estimates.
In particular, $\REG$ and $\RIS(4)$ produce estimates with MSE more than 20 orders of magnitude below that of $\RIS(3)$ (Figure \ref{fig:singlepath-results} is cut off at the bottom for clarity of the rest of the results). $\REG$ eventually passes the performance of $\RIS(4)$ since its knowledge of the transition probabilities allows it to eliminate sampling error in both the actions and the environment.
In the low-to-medium data regime, only $\RIS(0)$ outperforms $\OIS$.
However, as data increases, the MSE of all $\RIS$ methods and $\REG$ decreases faster than that of $\OIS$.
The similar performance of $RIS(L-1)$ and $\REG$ supports the connection between these methods that we discuss in Section \ref{sec:RIS:theory}.

\paragraph{RIS with Linear Function Approximation}

Our next set of experiments consider continuous state and action spaces in the Linear Dynamical System domain.
RIS represents $\pidata$ as a Gaussian policy with mean given as a linear function of the state features.
Similar to in Gridworld, we compare three variants of IS, each implemented with $\RIS$ and $\OIS$ weights: the ordinary IS estimator, weighted IS (WIS), and per-decison IS (PDIS).
Each method is averaged over $200$ trials and results are shown in Figure \ref{fig:lds-baselines}.

\begin{figure}
\centering
%\subfigure[LDS]{\includegraphics[width=0.48\linewidth]{figs/lds-baselines-nf}\label{fig:lds-baselines}}
%\subfigure[LDS Alt. Weights]{\includegraphics[width=0.48\linewidth]{figs/lds-alternatives-nf}\label{fig:lds-alternatives}}
\subfigure[LDS]{\includegraphics[width=0.48\linewidth]{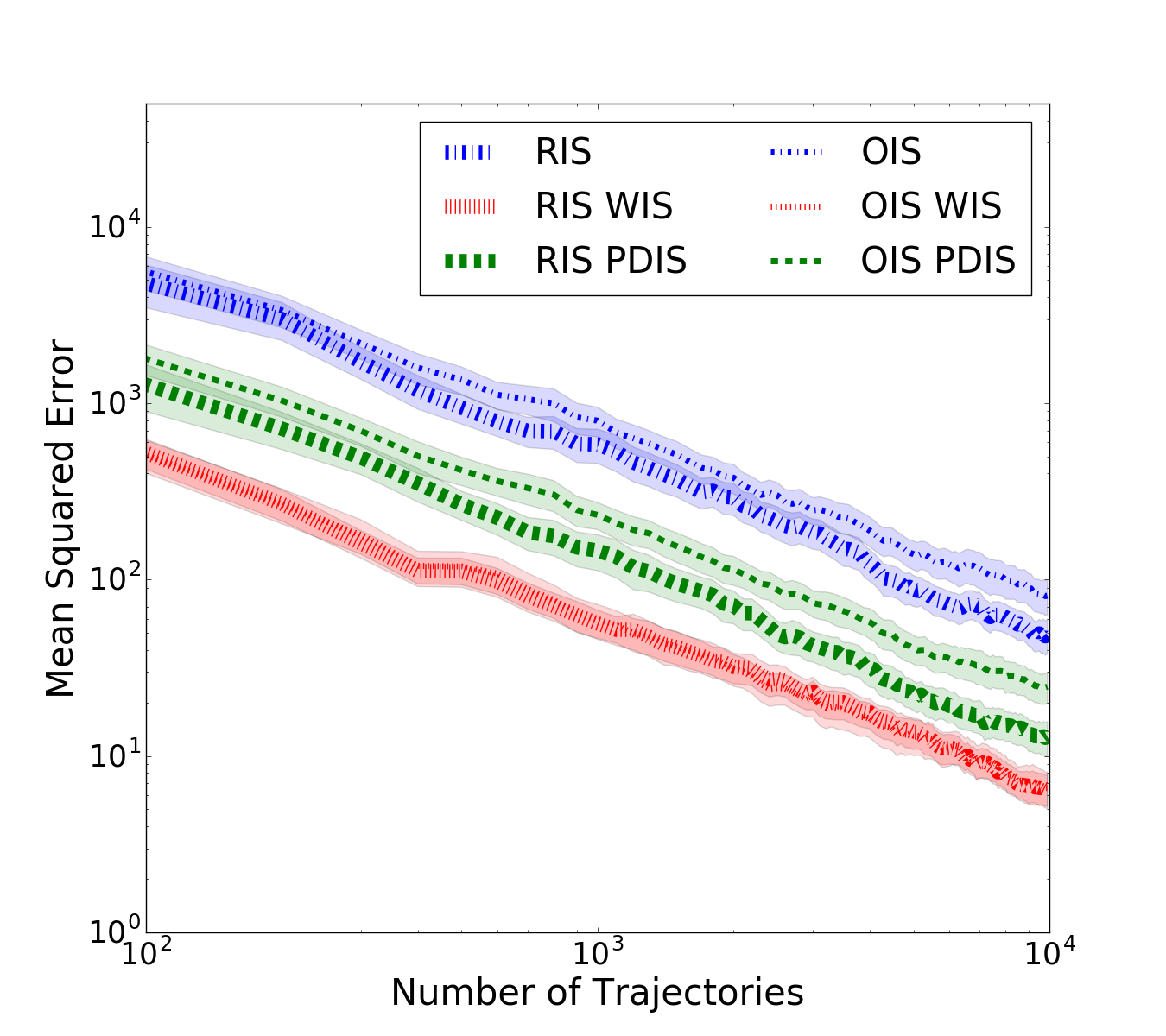}\label{fig:lds-baselines}}
\subfigure[LDS Alt. Weights]{\includegraphics[width=0.48\linewidth]{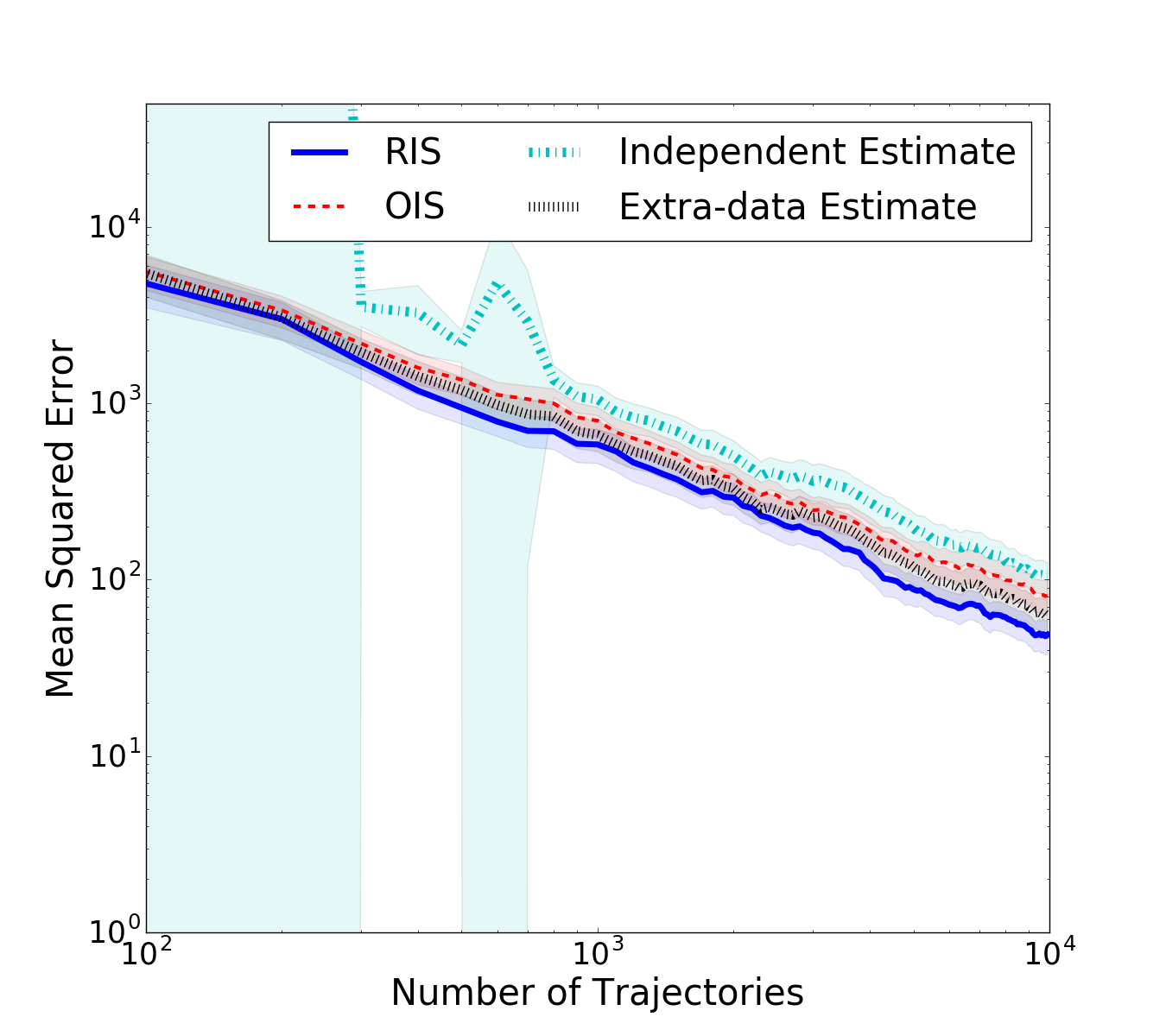}\label{fig:lds-alternatives}}
\caption{Linear dynamical system results. Figure \ref{fig:lds-baselines} shows the mean squared error (MSE) for three $\IS$ variants with and without RIS weights. Figure \ref{fig:lds-alternatives} shows the MSE for different methods of estimating the behavior policy compared to $\RIS$ and $\OIS$. Axes and scaling are the same as in Figure \ref{fig:gridworld}.}
\end{figure}

We see that $\RIS$ weights improve both IS and PDIS, while both WIS variants have similar MSE.
This result suggests that the MSE improvement from using $\RIS$ weights depends, at least partially, on the variant of $\IS$ being used.

Similar to Gridworld, we also consider estimating $\pidata$ with either an independent data-set or with extra data and see a similar ordering of methods.
%
%When estimating $\pidata$ with either an independent estimate of $\pidata$ (Independent Estimate) or adding additional data to $\Dtrain$ beyond what is in $\mathcal{D}$ (Extra-Data Estimate), results are similar to Gridworld.
%
\textbf{Independent Estimate} gives high variance estimates for small sample sizes but then approaches $\OIS$ as the sample size grows.
\textbf{Extra-Data Estimate} corrects for some sampling error and has lower MSE than $\OIS$.
$\RIS$ lowers MSE compared to all baselines.

%\begin{figure*}[t]
%\centering
%\subfigure[LDS]{\includegraphics[width=0.35\linewidth]{figs/lds-eval}\label{fig:lds-eval-baselines}}
%\subfigure[LDS Alt. Weights]{\includegraphics[width=0.35\linewidth]{figs/lds-alternatives}\label{fig:lds-eval-alternatives}}
%\caption{Policy evaluation in the linear dynamical system domain. Axes and error bars are the same as in Gridworld. \ref{fig:lds-eval-baselines} compares both RIS and OIS variants of basic importance sampling and weighted importance sampling. \ref{fig:lds-eval-alternatives} compares the performance of RIS when using alternative data sources to estimate $\pidata$.\label{fig:lds-eval}}
%\end{figure*}

\paragraph{RIS with Neural Networks}

Our remaining experiments use the Hopper and HalfCheetah domains.
RIS represents $\pidata$ as a neural network that maps the state to the mean of a Gaussian distribution over actions.
The standard deviation of the Gaussian is given by state-independent parameters.
In these experiments, we sample a single batch of $400$ trajectories and compare the MSE of $\RIS$ and $\IS$ on this batch.
We repeat this experiment $200$ times for each method.

Figure \ref{fig:bar} compares the MSE of $\RIS$ for different neural network architectures.
Our main point of comparison is $\RIS$ using the architecture that achieves the lowest validation error during training (the darker bars in Figure \ref{fig:bar}).
Under this comparison, the MSE of $\RIS$ with a two hidden layer network is lower than that of $\OIS$ in both Hopper and HalfCheetah though, in HalfCheetah, the difference is statistically insignificant.
We also observe that the policy class with the best validation error does \textit{not} always give the lowest MSE (e.g., in Hopper, the two hidden layer network gives the lowest validation loss but the network with a single layer of hidden units has $\approx25$\% less MSE than the two hidden layer network).
This last observation motivates our final experiment.

\paragraph{RIS Model Selection}

\begin{figure}
\centering
%\subfigure[LDS]{\includegraphics[width=0.24\linewidth]{figs/lds-baselines}\label{fig:lds-baselines}}
%\subfigure[LDS Alt. Weights]{\includegraphics[width=0.24\linewidth]{figs/lds-alternatives}\label{fig:lds-alternatives}}
\subfigure[Hopper]{\includegraphics[width=0.45\linewidth]{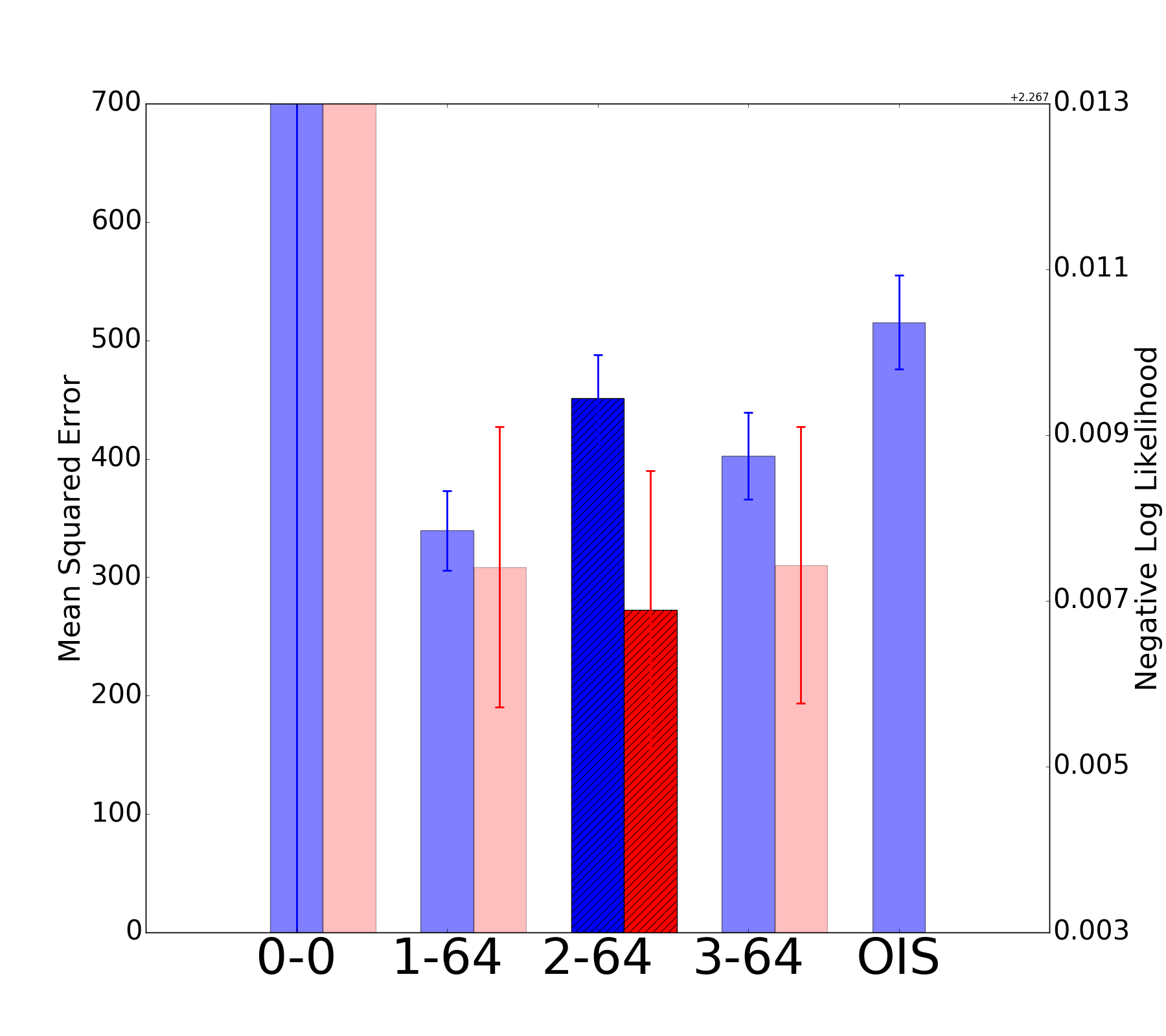}\label{fig:hopper}}
\subfigure[HalfCheetah]{\includegraphics[width=0.45\linewidth]{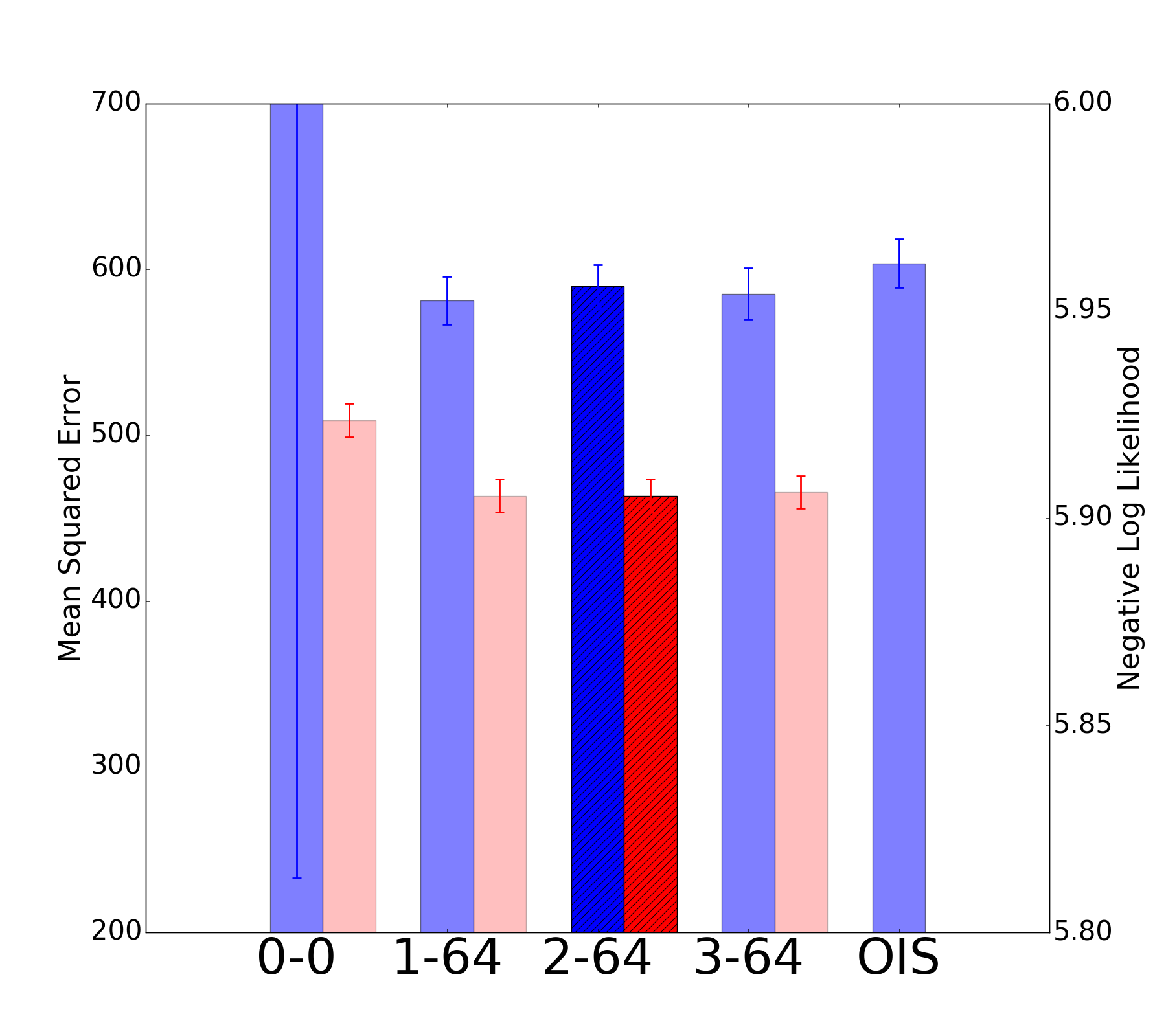}\label{fig:cheetah}}
\caption{Figures \ref{fig:hopper} and \ref{fig:cheetah} compare different neural network architectures (specified as \#-layers-\#-units) for regression importance sampling on the Hopper and HalfCheetah domain. The darker, blue bars give the MSE for each architecture and $\OIS$. Lighter, red bars give the negative log likelihood of a hold-out data set. Our main point of comparison is the MSE of the architecture with the lowest hold-out negative log likelihood (given by the darker pair of bars) compared to the MSE of IS.
\label{fig:bar}}
\end{figure}

Our final experiment aims to better understand how hold-out validation error relates to the MSE of the RIS estimator when using gradient descent to estimate neural network approximations of $\pidata$.
This experiment duplicates our previous experiment, except every 25 steps of gradient descent we stop optimizing $\pidata$ and compute the $\RIS$ estimate with the current $\pidata$ and its $\mse$.
We also compute the training and hold-out validation negative log-likelihood.
Plotting these values gives a picture of how the $\mse$ of $\RIS$ changes as our estimate of $\pidata$ changes.
Figure \ref{fig:fit_exp} shows this plot for the Hopper domain.

We see that the policy with minimal $\mse$ and the policy that minimizes validation loss are misaligned.
If training is stopped when the validation loss is minimized, the $\mse$ of $\RIS$ is lower than that of $\OIS$ (the intersection of the RIS curve and the vertical dashed line in Figure \ref{fig:fit_exp}.
However, the $\pidata$ that minimizes the validation loss curve is \textit{not} identical to the $\pidata$ that minimizes $\mse$.

To understand this result, we also plot the average $\RIS$ estimate throughout behavior policy learning (bottom of Figure \ref{fig:fit_exp}).
We can see that at the beginning of training, $\RIS$ tends to \textit{over-estimate} $v(\pieval)$ because the probabilities given by $\pidata$ to the observed data will be small (and thus the $\RIS$ weights are large).
As the likelihood of $\mathcal{D}$ under $\pidata$ increases (negative log likelihood decreases), the $\RIS$ weights become smaller and the estimates tend to \textit{under-estimate} $v(\pieval)$.
%
%To some extent, this tendency to under-estimate can be mitigated by avoiding over-fitting.
%
%
%The tendency for $\OIS$ to under-estimate policy value in MDPs has been previously observed and analyzed by Doroudi et al.\ \yrcite{doroudi2017importance}.
%
%
The implication of these observations, for $\RIS$, is that during behavior policy estimation the $\RIS$ estimate will likely have zero $\mse$ at some point.
Thus, there may be an early stopping criterion -- besides minimal validation loss -- that would lead to lower $\mse$ with $\RIS$, however, to date we have not found one.
Note that $\OIS$ also tends to under-estimate policy value in MDPs as has been previously analyzed by Doroudi et al. \yrcite{doroudi2017importance}.
Appendix F shows the same observations in the HalfCheetah domain.

\begin{figure}[t!]
\centering
\includegraphics[width=0.85\linewidth]{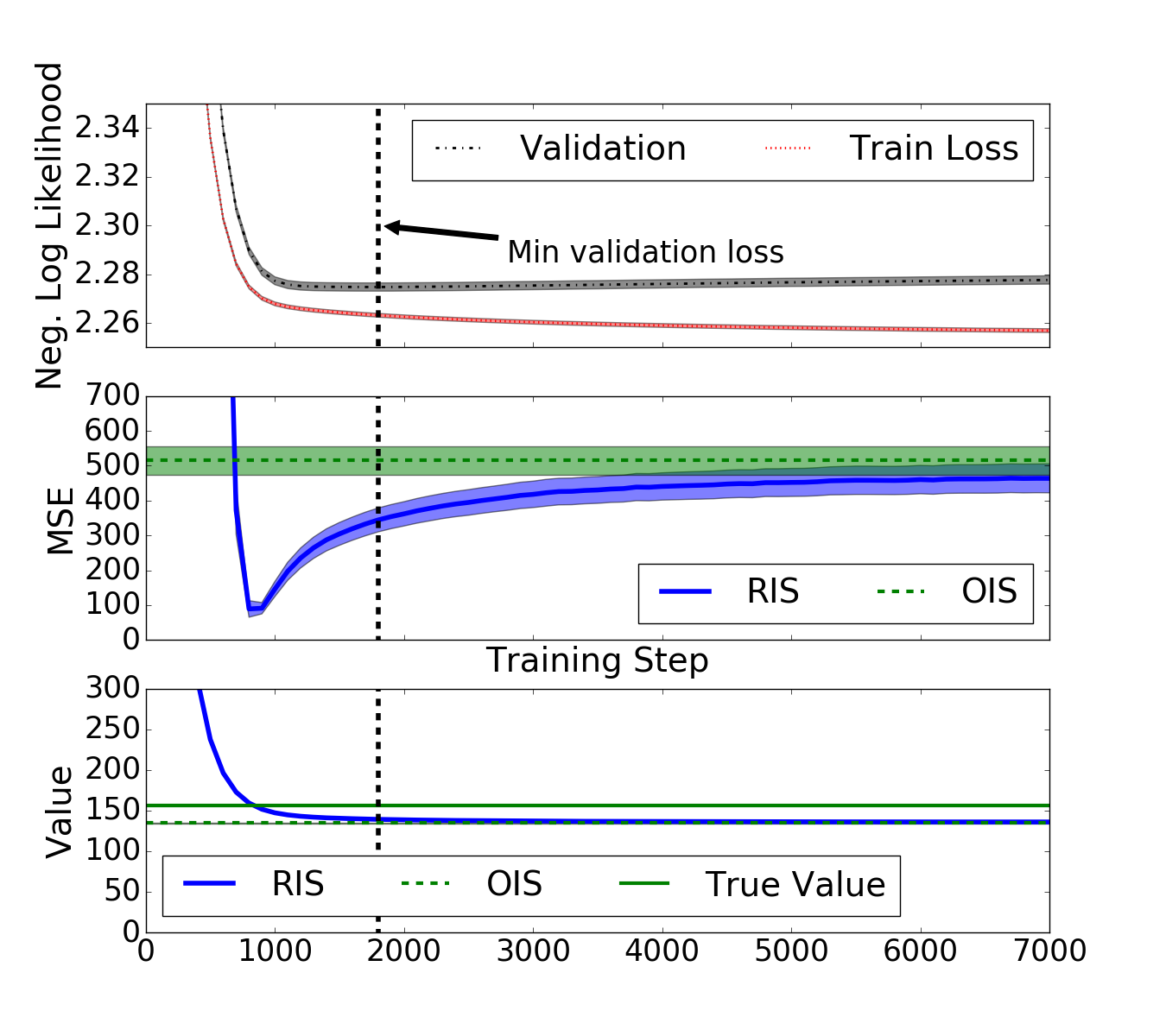}
%\hspace{20pt}
%\subfigure[HalfCheetah]{\includegraphics[width=0.4\linewidth]{figs/cheetah-fit-2}}
%\subfigure{\includegraphics[width=0.85\linewidth]{figs/legend}}
\caption{Mean squared error and estimate of the importance sampling estimator during training of $\pidata$. The x-axis is the number of gradient descent steps. The top plot shows the training and validation loss curves. The y-axis of the top plot is the average negative log-likelihood. 
The y-axis of the middle plot is mean squared error (MSE).
The y-axis of the bottom plot is the value of the estimate.
MSE is minimized close to, but slightly before, the point where the validation and training loss curves indicate that overfitting is beginning. 
This point corresponds to where the $\RIS$ estimate transitions from over-estimating to under-estimating.\label{fig:fit_exp}}
\end{figure}

\section{Related Work}

In this section we survey work related to behavior policy estimation for importance sampling.
Methods related to $\RIS$ have been studied for Monte Carlo integration \cite{henmi2007importance,delyon2016integral} and causal inference \cite{hirano2003efficient,rosenbaum1987model}.
The $\REG$ method (discussed below) can be seen as the direct extension of these methods to MDPs.
In contrast to these works, we study policy evaluation in Markov decision processes which introduces sequential structure into the samples and unknown stochasticity in the state transitions.
%
%Other works have explored directly estimating the importance weights instead of first estimating the proposal distribution (i.e., behavior policy) \cite{oates2017control,liu2017black}.
%
%These ``blackbox" importance sampling approaches show superior convergence rates compared to ordinary importance sampling.
%
These methods have also, to the best of our knowledge, \textit{not} been studied in Markov decision processes or for sequential data.

Li et al.\ \yrcite{li2015toward} study the \emph{regression} (REG) estimator for off-policy evaluation and show that its minimax MSE is asymptotically optimal though it might perform poorly for small sample sizes.
Though $\REG$ and $\RIS$ are equivalent for multi-armed bandit problems, for MDPs, the definition of $\REG$ and any $\RIS$ method diverge. 
% $\REG$ estimates the total reward along trajectories and requires knowledge of the transition probabilities while $\RIS$ methods estimate action probabilities and do \textit{not} require knowledge of the transition probabilities.
%
Figure \ref{fig:singlepath-results} shows that all tested $\RIS$ methods improve over $\REG$ for small sample sizes though $\REG$ has lower asymptotic $\mse$.
Intuitively, $\REG$ corrects for sampling error in both the action selection and state transitions through knowledge of the true state-transition function.
However, such knowledge is usually unavailable and, in these cases, $\REG$ is inapplicable. 

Narita et al.\ \yrcite{narita2019efficient} study behavior policy estimation for policy evaluation and improvement in multi-armed bandit problems.
They also show lower asymptotic variance (as we do), however, their results are only for the bandit setting.

In the contextual bandit literature, Dudik et al.\ \yrcite{dudik2011doubly} present finite sample bias and variance results for importance sampling that is applicable when the behavior policy probabilities are different than the true behavior policy.
Farajtabar et al.\ \yrcite{farajtabar2018more} extended these results to full MDPs.
These works make the assumption that $\pidata$ is estimated independently from the data used in the final $\IS$ evaluation.
In contrast, $\RIS$ uses the same set of data to both estimate $\pib$ and compute the $\IS$ evaluation.
This choice allows $\RIS$ to correct for sampling error and improve upon the $\OIS$ estimate (as shown in Figure \ref{fig:gridworld-alt}, \ref{fig:gridworld-onpol-alt}, and \ref{fig:lds-alternatives}).

A large body of work exists on lowering the variance of importance sampling for off-policy evaluation.
Such approaches include control variates \cite{jiang2016doubly,thomas2016data-efficient}, normalized importance weights \cite{precup2000eligibility,swaminathan2015self}, and importance ratio clipping \cite{bottou2013counterfactual}.
These variance reduction strategies are complementary to regression importance sampling; any of these methods can be combined with $\RIS$ for further variance reduction.

\section{Discussion and Future Work}

Our experiments demonstrate that regression importance sampling can obtain lower mean squared error than ordinary importance sampling for off-policy evaluation in Markov decision process environments.
The main practical conclusion of our paper is the importance of estimating $\pidata$ with the same data used to compute the importance sampling estimate.
We also demonstrate that estimating a behavior policy that conditions on trajectory segments -- instead of only the preceding state -- improves performance in the large sample setting.

%Our study left open several questions on the theoretical side.
%
For all $n$, $\RIS(n)$ is consistent and has lower asymptotic variance than $\OIS$.
There remain theoretical questions concerning the finite-sample setting and relaxing the assumption that we estimate $\pidata$ from a policy class that includes the true behavior policy.
The connection to the $\REG$ estimator and our empirical results suggest that $\RIS$ with $n$ close to $L$ may suffer from high bias.
Future work that quantifies or bounds this bias will give us a better understanding of RIS methods.
Relaxing the assumption that $\pib \in \Pi$ or analyzing the case when $\pib \not \in \Pi$ is also an important next step for bridging the gap between our presented theory and the use of $\RIS$ in settings where the policy class of $\pib$ is unknown.

In this paper we focused on \textit{batch} policy evaluation where $\mathcal{D}$ is given and fixed.
Studying $\RIS$ for \textit{online} policy evaluation setting is an interesting direction for future work.
Finally, incorporating $\RIS$ into policy improvement methods is an interesting direction for future work.
In work parallel to our own, two of the authors \cite{hanna2019reducing} explored using an estimated behavior policy to lower sampling error in on-policy policy gradient learning.
However, our approach in that paper only focuses on reducing variance in the one-step action selection while $\RIS$ could lower variance in the full return estimation.
%
%Importance sampling is a component of several leading RL algorithms: Retrace for multi-step Q-learning \cite{munos2016safe}, trust-region policy optimization \cite{schulman2015trust}, proximal policy optimization \cite{schulman2017proximal}, and ACER \cite{wang2017sample}.
%%
%Regression importance sampling can potentially speed up learning by reducing sampling error in these methods.
%%
%In preliminary experiments, not reported here, we have used RIS with REINFORCE \cite{williams1992simple} and TRPO \cite{schulman2015trust} and found that it does indeed improve performance.
%% 
%However, it still remains to see how these methods will scale to more challenging domains than the ones we have tested to date.

\section{Conclusion}

We have studied a class of off-policy evaluation importance sampling methods, called regression importance sampling methods, that apply importance sampling after first estimating the behavior policy that generated the data.
Notably, RIS estimates the behavior policy from the same set of data that is also used for the IS estimate.
Computing the behavior policy estimate and IS estimate from the same set of data allows RIS to correct for the sampling error inherent to importance sampling with the true behavior policy.
We evaluated RIS across several policy evaluation tasks and show that it improves over ordinary importance sampling -- that uses the true behavior policy -- in several off-policy policy evaluation tasks.
Finally, we showed that, as the sample size grows, it can be beneficial to ignore knowledge that the true behavior policy is Markovian.

{\small
\section*{Acknowledgments}
We would like to thank Garrett Warnell, Ishan Durugkar, Philip Thomas, Qiang Liu, Faraz Torabi, Leno Felipe da Silva, Marc Bellemare, Finale Doshi-Velez and the anonymous reviewers for insightful comments that suggested new directions to study and improved the final presentation of the work. This work has taken place in the Learning Agents Research
Group (LARG) and the Personal Autonomous Robots Lab (PEARL) at the Artificial Intelligence Laboratory, The University
of Texas at Austin.  LARG research is supported in part by NSF
(IIS-1637736, IIS-1651089, IIS-1724157), ONR (N00014-18-2243), FLI
(RFP2-000), ARL, DARPA, Intel, Raytheon, and Lockheed Martin. PeARL research is supported in part by the NSF (IIS-1724157, IIS-1638107, IIS-1617639, IIS-1749204) and ONR(N00014-18-2243).
Josiah Hanna is supported by an IBM PhD Fellowship.
Peter Stone serves on the Board of Directors of Cogitai, Inc.  The
terms of this arrangement have been reviewed and approved by the
University of Texas at Austin in accordance with its policy on
objectivity in research.
}

\bibliographystyle{icml2019}

\onecolumn

\begin{appendix}

%\section{Variance of RIS: multi-armed bandits}
%
%Results from Li et al.\ for multi-armed bandit problems
%\begin{itemize}
%\item IS will have positive MSE even when variance of rewards is zero.
%\item RIS (1) will have zero MSE when variance of rewards is zero.
%\item RIS(1) is asymptotically optimal.
%\end{itemize}
%
%The variance of RIS(1) is:
%\[ \mathbf{E}\left[ \mathbf{E}_\pieval[\hat{R}(a)]^2 \right] - \mathbf{E}\left[\mathbf{E}_\pieval[\hat{R}(a)]\right]^2\]
%
%The variance of IS is:
%\[ \mathbf{E}_\pieval\left[ \mathbf{E}[\frac{c(a)/n}{\pib(a)}\hat{R}(a)]^2 \right] - \mathbf{E}\left[\mathbf{E}_\pieval[\frac{c(a)/n}{\pib(a)}\hat{R}(a)]\right]^2\]

\section{Regression Importance Sampling is Consistent}

In this appendix we show that the regression importance sampling (RIS) estimator is a consistent estimator of $v(\pieval)$ under two assumptions.
The main intuition for this proof is that $\RIS$ is performing policy search on an estimate of the log-likelihood, $\widehat{\loss}(\pi | \data)$, as a surrogate objective for the true log-likelihood, $\loss(\pi)$.
Since $\pib$ has generated our data, $\pib$ is the optimal solution to this policy search.
As long as, for all $\pi$, $\widehat{\loss}(\pi | \data)$ is a consistent estimator of $\loss(\pi)$ then selecting $\pidata = \displaystyle \argmax_\pi \widehat{\loss}(\pi | \data)$ will converge probabilistically to $\pib$ and the $\RIS$ estimator will be the same as the $\OIS$ estimator which is a consistent estimator of $v(\pieval)$.
If the set of policies we search over, $\Pi$, is countable then this argument is almost enough to show $\RIS$ to be consistent.
The difficulty (as we explain below) arises when $\Pi$ is \textit{not} countable.

Our proof takes inspiration from Thomas and Brunskill who show that their Magical Policy Search algorithm converges to the optimal policy by maximizing a surrogate estimate of policy value \yrcite{thomas2016magical}.
They show that performing policy search on a policy value estimate, $\hat{v}(\pi)$, will almost surely return the policy that maximizes $v(\pi)$ if $\hat{v}(\pi)$ is a consistent estimator of $v(\pi)$.
The proof is almost identical; the notable difference is substituting the log-likelihood, $\loss(\pi)$, and a consistent estimator of the log-likelihood, $\widehat{\loss}(\pi | \data)$, in place of $v(\pi)$ and $\hat{v}(\pi)$.

\subsection{Definitions and Assumptions}

Let $(\Omega, \mathcal{F}, \mu)$ be a probability space and $D_m: \Omega \rightarrow \mathcal{D}$ be a random variable.
$D_m(\omega)$ is a sample of $m$ trajectories with $\omega \in \Omega$.
Let $d_\pib$ be the distribution of states under $\pib$.
Define the expected log-likelihood:
 \[\loss(\pi) = \mathbf{E}\left[\log \pi(A | S) \middle| S \sim d_\pib, A \sim \pib\right]\] 
 and its sample estimate from samples in $D_m(\omega)$: 
\[ \widehat{\loss}(\pi | D_m(\omega)) = \frac{1}{mL} \sum_{H \in D_m(\omega)} \sum_{t=0}^{L-1} \log \pi(A_t^{H} | S_t^{H}). \]
where $S_t^H$ and $A_t^H$ are the random variables representing the state and action that occur at time-step $t$ of trajectory $H$.

Assuming for all $s,a$ the variance of $\log \pi(a | s)$ is bounded, $\widehat{\loss}(\pi | D_m(\omega))$ is a consistent estimator of $\loss(\pi)$.
We make this assumption explicit:
\begin{ass}
(Consistent Estimation of Log likelihood). For all $\pi \in \Pi$, $\widehat{\loss}(\pi | D_m(\omega)) \xrightarrow{a.s.} \loss(\pi)$. 
\end{ass}
This assumption will hold when the support of $\pib$ is a subset of the support of $\pi$ for all $\pi \in \Pi$, i.e., no $\pi \in \Pi$ places zero probability measure on an action that $\pib$ might take.
%
%One setting where this does \textit{not} hold is count-based estimators with a finite number of samples, i.e., if an action has not been taken then the estimate of its probability is zero.
%%
%However, since we are concerned with consistency, we ignore this issue since the probability that an action with non-zero probability under $\pib$ is absent from $\mathcal{D}$ after $m$ samples goes to zero as $m \rightarrow \infty$.
%
We can ensure this assumption is satisfied by only considering $\pi \in \Pi$ that place non-zero probability on any action that $\pib$ has taken.

We also make an additional assumption about the piece-wise continuity of the log-likelihood, $\loss$, and the estimate of the log-likelihood, $\widehat{\loss}$.
First we present two necessary definitions as given by Thomas and Brunskill \yrcite{thomas2016magical}:

\begin{defn}
(Piecewise Lipschitz continuity). We say that a function $f: M \rightarrow \mathbb{R}$ on a metric space $(M, d)$ is piecewise Lipschitz continuous with respect to Lipschitz constant $K$ and with respect to a countable partition, $\{M_1,M_2,...\}$ if $f$ is Lipschitz continuous with Lipschitz constant $K$ on all metric spaces in $\{(M_i, d_i)\}_{i=1}^\infty$.
\end{defn}

\begin{defn}
($\delta$-covering). If $(M,d)$ is a metric space, a set $X \subset M$ is a $\delta$-covering of $(M,d)$ if and only if $\max_{y \in M} \min_{x \in X} d(x,y) \leq \delta$.
\end{defn}

We now present our final assumption:

\begin{ass}(Piecewise Lipschitz objectives). Our policy class, $\Pi$, is equipped with a metric, $d_\Pi$, such that for all $D_m(\omega)$ there exist countable partition of $\Pi$, $\Pi^\loss \coloneqq \{\Pi^\loss_1, \Pi^\loss_2, ...\}$ and $\Pi^{\widehat{\loss}} \coloneqq \{\Pi^{\widehat{\loss}}_1, \Pi^{\widehat{\loss}}_2, ...\}$, where $\loss$ and $\widehat{\loss}(\cdot | D_m(\omega))$ are piecewise Lipschitz continuous with respect to $\Pi^\loss$ and $\Pi^{\widehat{\loss}}$ with Lipschitz constants $K$ and $\widehat{K}$ respectively. 
Furthermore, for all $i \in \mathbb{N}_{>0}$ and all $\delta > 0$ there exist countable $\delta$-covers of $\Pi^\loss_i$ and $\Pi^{\widehat{\loss}}_i$.
\end{ass}

As pointed out by Thomas and Brunskill, this assumption holds for the most commonly considered policy classes but is also general enough to hold for other settings (see Thomas and Brunskill \yrcite{thomas2016magical} for further discussion of Assumptions 1 and 2 and the related definitions).

\subsection{Consistency Proof}

Note that: 
\[ \pib = \argmax_{\pi \in \Pi} \loss(\pi)
\]
\[\pidata = \argmax_{\pi \in \Pi} \widehat{\loss}(\pi | D_m(\omega)).\]
Define the KL-divergence ($D_\mathtt{KL})$) between $\pib$ and $\pidata$ in state $s$ as: $\delta_\mathtt{KL}(s) = D_\mathtt{KL}(\pib(\cdot|s), \pidata(\cdot | s))$.

\begin{lemma}\label{lemma:consistent}
If Assumptions 1 and 2 hold then $\mathbf{E}_{d_\pib}[\delta_\mathtt{KL}(s)] \xrightarrow{a.s.} 0$.
\end{lemma}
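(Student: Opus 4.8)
The plan is to mirror the argument of Thomas and Brunskill for Magical Policy Search, substituting the log-likelihood $\loss$ for the policy value $v$ and $\widehat{\loss}$ for $\hat v$. The target $\mathbf{E}_{d_\pib}[\delta_\mathtt{KL}(s)]$ is a nonnegative function of $\pidata$ that vanishes exactly when $\pidata = \pib$ (almost everywhere under $d_\pib$), because $\pib = \argmax_\pi \loss(\pi)$ is the unique maximizer up to $d_\pib$-null sets; indeed $\loss(\pib) - \loss(\pi) = \mathbf{E}_{d_\pib}[D_\mathtt{KL}(\pib(\cdot|s),\pi(\cdot|s))]$ by the standard decomposition of expected log-likelihood into an entropy term plus a KL term. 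So it suffices to show that $\loss(\pidata) \xrightarrow{a.s.} \loss(\pib)$, i.e. that maximizing the empirical surrogate $\widehat{\loss}(\cdot|D_m(\omega))$ drives the true objective $\loss$ to its optimum.

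The key steps, in order, are as follows. First I would record the decomposition above so that controlling $\mathbf{E}_{d_\pib}[\delta_\mathtt{KL}]$ reduces to controlling the $\loss$-suboptimality gap $\loss(\pib) - \loss(\pidata)$. Second, I would invoke Assumption 1 to get pointwise a.s.\ convergence $\widehat{\loss}(\pi|D_m(\omega)) \to \loss(\pi)$ for each fixed $\pi$, and in particular $\widehat{\loss}(\pib|D_m(\omega)) \to \loss(\pib)$. Third comes the crux: upgrading pointwise convergence to the uniform-type control needed to handle $\pidata$, which is itself data-dependent. Here I would use Assumption 2: partition $\Pi$ into the pieces on which $\loss$ and $\widehat{\loss}$ are Lipschitz, take a countable $\delta$-cover of each piece, apply the countable union over a countable family of a.s.\ events (so the convergence holds simultaneously at all cover points a.s.), and then use the Lipschitz property to transfer the estimate from a cover point to any nearby $\pi$ — including to $\pidata$ — at the cost of an $O(\delta)$ error. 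Combining, for any $\delta$, eventually $\loss(\pidata) \geq \widehat{\loss}(\pidata|D_m) - (K+\widehat K)\delta - \varepsilon \geq \widehat{\loss}(\pib|D_m) - (K+\widehat K)\delta - \varepsilon \geq \loss(\pib) - (K+\widehat K)\delta - 2\varepsilon$, where the middle inequality is optimality of $\pidata$ for $\widehat{\loss}$. Since $\loss(\pib) \geq \loss(\pidata)$ always, letting $\varepsilon \to 0$ and $\delta \to 0$ along a countable sequence gives $\loss(\pidata) \xrightarrow{a.s.} \loss(\pib)$, hence $\mathbf{E}_{d_\pib}[\delta_\mathtt{KL}(s)] \xrightarrow{a.s.} 0$.

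The main obstacle is precisely the third step — the data dependence of $\pidata$ means one cannot simply plug it into the pointwise convergence statement, and $\Pi$ is generally uncountable, so a naive union bound fails. The partition-plus-$\delta$-cover machinery of Assumption 2 is exactly what is designed to circumvent this, but one must be careful that the partition may itself depend on $\omega$ and that the covers are countable for every $\delta$ so that the relevant exceptional null sets can be unioned over a countable index set (pieces $\times$ rational $\delta$ $\times$ cover points). A secondary subtlety is ensuring the two-sided Lipschitz transfer is valid for both $\loss$ and $\widehat{\loss}$ on the (possibly different) partitions $\Pi^\loss$ and $\Pi^{\widehat{\loss}}$; intersecting the two partitions into a common countable refinement handles this cleanly. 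Beyond that, the argument is essentially bookkeeping that follows the template of Thomas and Brunskill \yrcite{thomas2016magical}.
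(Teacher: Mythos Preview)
Your proposal is correct and follows essentially the same approach as the paper's proof: both reduce the claim to showing $\loss(\pidata)\to\loss(\pib)$ via the identity $\loss(\pib)-\loss(\pi)=\mathbf{E}_{d_\pib}[D_{\mathtt{KL}}(\pib(\cdot|s),\pi(\cdot|s))]$, then upgrade the pointwise a.s.\ convergence of Assumption~1 to a uniform statement using the countable $\delta$-covers and Lipschitz bounds of Assumption~2, and finish with the sandwich $\loss(\pidata)\le\loss(\pib)\le\widehat{\loss}(\pib)+\epsilon\le\widehat{\loss}(\pidata)+\epsilon\le\loss(\pidata)+2\epsilon+(K+\widehat K)\delta$. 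The only cosmetic difference is that the paper absorbs the $(K+\widehat K)\delta$ term into $\epsilon$ by choosing $\delta=\epsilon/(K+\widehat K)$ before writing the chain, whereas you carry it explicitly and send $\delta\to 0$ at the end.
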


\begin{proof}
Define $\Delta(\pi, \omega) = |\widehat{\loss}(\pi | D_m(\omega)) - \loss(\pi)|$.
From Assumption 1 and one definition of almost sure convergence, for all $\pi \in \Pi$ and for all $\epsilon > 0$:
\begin{equation}\label{eq:consistency:1}
\Pr\left( \liminf_{m\rightarrow\infty} \{ \omega \in \Omega: \Delta(\pi, \omega) < \epsilon \}\right) = 1.
\end{equation}

Thomas and Brunskill point out that because $\Pi$ may not be countable, (\ref{eq:consistency:1}) may not hold at the same time for all $\pi \in \Pi$. 
More precisely, it does \textit{not} immediately follow that for all $\epsilon >0$:
{%\small
\begin{equation}\label{eq:consistency:2}
\Pr\left( \liminf_{m\rightarrow\infty} \{ \omega \in \Omega: \forall \pi \in \Pi, \Delta(\pi, \omega) < \epsilon \}\right) = 1.
\end{equation}
}

Let $C(\delta)$ denote the union of all of the policies in the $\delta$-covers of the countable partitions of $\Pi$ assumed to exist by Assumption 2. 
Since the partitions are countable and the $\delta$-covers for each region are assumed to be countable, we have that $C(\delta)$ is countable for all $\delta$. 
Thus, for all $\pi \in C(\delta)$, (\ref{eq:consistency:1}) holds simulatenously.
More precisely, for all $\delta > 0$ and for all $\epsilon > 0$:

{\small
\begin{equation} \label{eq:consistency:3}
\Pr\left( \liminf_{m\rightarrow\infty} \{ \omega \in \Omega: \forall \pi \in C(\delta), \Delta(\pi, \omega) < \epsilon \}\right) = 1.
\end{equation}
}

Consider a $\pi \not\in C(\delta)$. By the definition of a $\delta$-cover and Assumption 2, we have that $\exists\pi^\prime \in \Pi_i^\loss, d(\pi, \pi^\prime) \leq \delta$.
Since Assumption 2 requires $\loss$ to be Lipschitz continuous on $\Pi^\loss_i$, we have that $|\loss(\pi) - \loss(\pi^\prime)| \leq K\delta$. 
Similarly $|\widehat{\loss}(\pi | D_m(\omega)) - \widehat{\loss}(\pi^\prime | D_m(\omega))| \leq \widehat{K}\delta$.
So, $|\widehat{\loss}(\pi | D_m(\omega)) - \loss(\pi)| \leq |\widehat{\loss}(\pi | D_m(\omega)) - \loss(\pi^\prime)| + K\delta \leq |\widehat{\loss}(\pi^\prime | D_m(\omega)) - \loss(\pi^\prime)| + (\widehat{K} + K)\delta$.
Then it follows that for all $\delta > 0$:
\begin{align*}
\left( \forall \pi \in C(\delta), \Delta(\pi, \omega) \leq \epsilon \right) \rightarrow \\
 \left( \forall \pi \in \Pi, \Delta(\pi, \omega) < \epsilon + (K + \widehat{K})\delta\right).
\end{align*}
Substituting this into (\ref{eq:consistency:3}) we have that for all $\delta > 0$ and for all $\epsilon > 0$:

{\small
\[ \Pr\left( \liminf_{m\rightarrow\infty} \{\omega \in \Omega: \forall \pi \in \Pi, \Delta(\pi, \omega) < \epsilon + (K + \widehat{K}) \delta\} \right) = 1\]
}

The next part of the proof massages (\ref{eq:consistency:3}) into a statement of the same form as (\ref{eq:consistency:2}).
Consider the choice of $\delta \coloneqq \epsilon / (K + \widehat{K})$.
Define $\epsilon^\prime = 2 \epsilon$.
Then for all $\epsilon^\prime > 0$:

{%\small
\begin{equation}\label{eq:3}
\Pr\left( \liminf_{m\rightarrow\infty} \{\omega \in \Omega: \forall \pi \in \Pi, \Delta(\pi, \omega) < \epsilon^\prime\} \right) = 1
\end{equation}
}

%Let $\pib \in \argmax_{\pi \in \Pi} l(\pi)$ and $\hat{\pib} \in \argmax_{\pi \in \Pi} \hat{l}(\pi | D_m(\omega))$ (suppressing the dependency of $\hat{\pib}$ on $\omega$.
%
Since $\forall \pi \in \Pi, \Delta(\pi, \omega) < \epsilon^\prime$, we obtain:
\begin{equation}\label{eq:4}
\Delta(\pib, \omega) < \epsilon^\prime
\end{equation}
\begin{equation}\label{eq:5}
\Delta(\pidata, \omega) < \epsilon^\prime
\end{equation}
and then applying the definition of $\Delta$
\begin{align}
\loss(\pidata) \stackrel{(a)}{\leq}& \loss(\pib) \label{eq6} \\ 
\stackrel{(b)}{<}& \widehat{\loss}(\pib | D_m(\omega)) + \epsilon^\prime \\
\stackrel{(c)}{\leq}& \widehat{\loss}(\pidata|D_m(\omega)) + \epsilon^\prime \\
\stackrel{(d)}{\leq}& \loss(\pidata) + 2 \epsilon^\prime \label{eq7}
\end{align}

%\textbf{Numbers are wrong for equations -- check throughtout!!!}
where (a) comes from the fact that $\pib$ maximizes $\loss$, (b) comes from (\ref{eq:4}), (c) comes from the fact that $\pidata$ maximizes $\widehat{\loss}(\cdot | D_m(\omega))$, and (d) comes from (\ref{eq:5}). Considering (\ref{eq6}) and (\ref{eq7}), it follows that $| \loss(\pidata) - \loss(\pib)| < 2\epsilon^\prime$. Thus, (\ref{eq:3}) implies that:
%%%%%%%%%%%%%%%%%%%%%%%%%%%%%
%%%%%%%%%%%%%%%%%%%%%%%%%%%%%

\[\forall \epsilon^\prime > 0, \Pr\left( \liminf_{m\rightarrow\infty} \{\omega \in \Omega: | \loss(\pidata) - \loss(\pib)| < 2\epsilon^\prime \} \right) = 1\]

Using $\epsilon'' \coloneqq 2\epsilon^\prime$ we obtain:
\[\forall \epsilon'' > 0, \Pr\left( \liminf_{m\rightarrow\infty} \{\omega \in \Omega: | \loss(\pidata) - \loss(\pib)| < \epsilon'' \} \right) = 1\]
From the definition of the KL-Divergence, 
\[
\loss(\pidata) - \loss(\pib) = \mathbf{E}_{d_\pib}[\delta_\mathtt{KL}(s)] \]
and we obtain that:
\[\forall \epsilon > 0, \Pr \left(  \liminf_{n \rightarrow \infty} \{ \omega \in \Omega: | - \mathbf{E}_{d_\pib}[\delta_\mathtt{KL}(s)] | < \epsilon \} \right) = 1\]
And finally, since the KL-Divergence is non-negative:
\[\forall \epsilon > 0, \Pr \left(  \liminf_{m\rightarrow\infty} \{\omega \in \Omega: \mathbf{E}_{d_\pib}[\delta_\mathtt{KL}(s)]| < \epsilon \} \right) = 1,\]

which, by the definition of almost sure convergence, means that $\mathbf{E}_{d_\pib}[\delta_\mathtt{KL}(s)] \xrightarrow{a.s.} 0$.
\end{proof}

\begin{proposition}
If Assumptions 1 and 2 hold, then $\RIS(n)$ is a consistent estimator of $v(\pieval)$: $\RIS(n)(\pieval, \mathcal{D}) \xrightarrow{a.s.} v(\pieval)$.
\end{proposition}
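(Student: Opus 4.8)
The plan is to decompose the error as $\RIS(n)(\pieval,\data)-v(\pieval)=\bigl(\RIS(n)(\pieval,\data)-\OIS(\pieval,\data)\bigr)+\bigl(\OIS(\pieval,\data)-v(\pieval)\bigr)$ and to show that each term vanishes almost surely. The second term is the classical one: the trajectories in $\data$ are i.i.d.\ draws from $\pib$, so $\OIS(\pieval,\data)$ is an average of i.i.d.\ terms $g(H_i)\prod_{t=0}^{L-1}\pieval(A_t^{(i)}\mid S_t^{(i)})/\pib(A_t^{(i)}\mid S_t^{(i)})$, each with mean $v(\pieval)$ by the importance sampling identity; since $L$ is finite, the rewards are bounded, and the coverage assumption $\pieval\ll\pib$ keeps every importance weight bounded, the strong law of large numbers gives $\OIS(\pieval,\data)\xrightarrow{a.s.}v(\pieval)$.

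For the first term I would start from Lemma~\ref{lemma:consistent}. By Pinsker's inequality, $\mathbf{E}_{d_\pib}[\delta_\mathtt{KL}(s)]\xrightarrow{a.s.}0$ implies $\mathbf{E}_{d_\pib}\!\bigl[\,\|\pib(\cdot\mid s)-\pidata^{(n)}(\cdot\mid s)\|_1\,\bigr]\le\sqrt{2\,\mathbf{E}_{d_\pib}[\delta_\mathtt{KL}(s)]}\xrightarrow{a.s.}0$, i.e.\ $\pidata^{(n)}\to\pib$ in $L^1(d_\pib)$ almost surely; for $n>0$ the same holds with $\delta_\mathtt{KL}$ and $\loss$ read over $n$-step histories rather than states, once one notes that the history-independent $\pib$ is still the population maximizer of $\loss$ inside $\Pi^n$ (conditioning on extra history cannot improve on the Bayes-optimal, Markovian, next-action predictor). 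Next I would control the gap by a crude continuity estimate: since a difference of products of uniformly bounded factors is bounded by a constant times the sum of the factorwise differences, and since $g$ has finite range,
\[
\bigl|\RIS(n)(\pieval,\data)-\OIS(\pieval,\data)\bigr|\;\le\;C\cdot\frac{1}{mL}\sum_{i=1}^{m}\sum_{t=0}^{L-1}\bigl\|\pib(\cdot\mid S_t^{(i)})-\pidata^{(n)}(\cdot\mid H_{t-n:t}^{(i)})\bigr\|_1,
\]
where $C$ absorbs the range of $g$ and a uniform bound on the ratios under $\pieval$, $\pib$, and $\pidata^{(n)}$ (the last finite because $\pidata^{(n)}$ never assigns zero probability to an action seen in $\data$).

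It remains to drive the right-hand empirical average to $0$ almost surely, and this is the delicate point: $\pidata^{(n)}$ is fit on the very data $\data$ that the average is taken over, so a term-by-term argument is not available. Instead I would prove a uniform law of large numbers --- for every $\pi\in\Pi^n$ the empirical average of $\|\pib(\cdot\mid S_t)-\pi(\cdot\mid H_{t-n:t})\|_1$ over $\data$ converges almost surely, uniformly in $\pi$, to its population counterpart $\mathbf{E}_{d_\pib}[\|\pib(\cdot\mid s)-\pi(\cdot\mid s)\|_1]$ --- by reusing exactly the countable-partition / $\delta$-cover, piecewise-Lipschitz machinery of Assumption 2 that upgrades pointwise to uniform convergence in the proof of Lemma~\ref{lemma:consistent}. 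Instantiating that uniform statement at $\pi=\pidata^{(n)}$ and combining with the $L^1(d_\pib)$ convergence established above forces the empirical average, and hence the first term, to $0$ almost surely; adding the two terms yields $\RIS(n)(\pieval,\data)\xrightarrow{a.s.}v(\pieval)$.

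The main obstacle is precisely this last step. Because $\RIS$ deliberately re-uses $\data$ both to estimate $\pidata^{(n)}$ and to form the value estimate, one cannot argue ``$\pidata^{(n)}\to\pib$ and the estimator is continuous in its behavior-policy argument'' at a fixed sample path; it is the uniform-over-$\Pi^n$ control --- which is exactly why Assumption 2 is stated in terms of countable partitions and $\delta$-covers rather than mere continuity --- that makes substituting the data-dependent $\pidata^{(n)}$ into the bound legitimate. The example in Section~\ref{sec:example} is the one place a shortcut is available: in a deterministic finite MDP the empirical average reduces to finitely many ordinary sample averages $c(h)/m\to\Pr(h\mid\pib)$, so no uniform law is needed there.
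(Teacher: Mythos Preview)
Your decomposition $\RIS-v(\pieval)=(\RIS-\OIS)+(\OIS-v(\pieval))$, with Lemma~\ref{lemma:consistent} handling the first term and the strong law the second, is exactly the paper's route. The paper's own proof is in fact a three-sentence sketch that simply asserts ``almost sure convergence to the true behavior policy means that $\RIS$ almost surely converges to the ordinary $\OIS$ estimate'' without further argument; your explicit treatment of the data-dependence of $\pidata^{(n)}$ via a uniform law over $\Pi^n$ therefore supplies rigor the paper omits rather than deviating from its approach.
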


\begin{proof}
Lemma \ref{lemma:consistent} shows that as the amount of data increases, the behavior policy estimated by $\RIS$ will almost surely converge to the true behavior policy.
Almost sure convergence to the true behavior policy means that $\RIS$ almost surely converges to the ordinary $\OIS$ estimate.
Since $\OIS$ is a consistent estimator of $v(\pieval)$, $\RIS$ is also a consistent estimator of $v(\pieval)$.
\end{proof}

\section{Asymptotic Variance Proof}

In this appendix we prove that, $\forall n$, $\RIS(n)$ has asymptotic variance at most that of OIS. 
We give this result as a corollary to Theorem 1 of Henmi et al.\ \yrcite{henmi2007importance} that holds for general Monte Carlo integration. 
Note that while we define distributions as probability mass functions, this result can be applied to continuous-valued state and action spaces by replacing probability mass functions with density functions.
\begin{corollary}
Let $\Pi_\btheta^n$ be a class of twice differentiable policies, $\pitheta(\cdot | s_{t-n}, a_{t-n},\dots, s_t)$. If $\exists \tilde{\btheta}$ such that $\pi_{\tilde{\btheta}} \in \Pi_\btheta^n$ and $\pi_{\tilde{\btheta}} = \pib$ then \[\var_A({\RIS(n)(\pieval, \mathcal{D})}) \leq \var_A({\IS(\pieval, \mathcal{D}, \pib)})\]
where $\var_A$ denotes the asymptotic variance.
\end{corollary}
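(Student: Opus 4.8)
The plan is to invoke Theorem 1 of Henmi et al.\ \yrcite{henmi2007importance}, which studies Monte Carlo integration where the proposal distribution is not used directly but is replaced by its maximum-likelihood estimate within a parametric family, and establishes that the resulting estimator has asymptotic variance no larger than the estimator that uses the true proposal. The main task is therefore a translation: I would set up a correspondence between our RIS$(n)$ estimator and the ``estimated sampler'' estimator of Henmi et al., check that their hypotheses are met in our setting, and read off the conclusion. Concretely, the target integrand plays the role of $g(H)\,w_{\pieval}(H)$ (the return times the numerator of the importance ratio), the ``true sampler'' is the distribution over trajectories induced by $\pib$, and the parametric proposal family is the distribution over trajectories induced by $\pitheta \in \Pi_\btheta^n$. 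The identifiability/realizability hypothesis $\pi_{\tilde\btheta} = \pib$ for some $\tilde\btheta$ is exactly the condition Henmi et al.\ require so that the MLE is consistent for the true sampler, and twice-differentiability of $\pitheta$ in $\btheta$ gives the smoothness their asymptotic expansion needs.

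The key steps, in order, would be: (1) Write the trajectory likelihood under a policy in the family as $q_\btheta(H) = \big(\prod_{t} P(s_{t+1}\mid s_t,a_t)\big)\big(\prod_{t}\pitheta(a_t \mid H_{t-n:t})\big) d_0(s_0)$, and note that the transition/initial-state factors are common to numerator and denominator, so they cancel in both the OIS and RIS$(n)$ importance ratios; hence the estimators depend on $\btheta$ only through the action-probability factors, and the score function / Fisher information appearing in Henmi et al.'s expansion is precisely that of the conditional action model estimated by the MLE in Equation~(\ref{eq:RIS}). (2) Identify the two estimators: $\IS(\pieval,\mathcal{D},\pib)$ is the plain importance-sampling estimator with the true sampler, and $\RIS(n)$ is the estimated-sampler estimator, both averaging the same base quantity over the same $m$ \iid trajectories drawn from $\pib$. (3) Verify regularity: bounded return (finite $L$, assumed), the log-likelihood is twice differentiable by hypothesis, and the realizability assumption ensures the MLE converges to $\tilde\btheta$ with the true sampler recovered; these are the ingredients for the $\sqrt{m}$-asymptotic-normality argument underlying their Theorem~1. (4) Apply their Theorem~1 to conclude $\var_A(\RIS(n)) = \var_A(\IS(\cdot,\pib)) - (\text{a nonnegative projection term})$, where the subtracted term is the variance of the projection of the influence function onto the tangent space of the estimated-sampler family; nonnegativity of this term gives the stated inequality. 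I would also remark (as the statement of Corollary 1 in the body does) that the expansion carries $o_p(1)$ remainder terms, so the inequality is genuinely asymptotic.

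The main obstacle I anticipate is not the inequality itself but making the reduction airtight in the presence of the MDP's unknown, uncontrolled stochasticity in states and rewards. Henmi et al.'s setup has a fully specified proposal density up to the finite-dimensional parameter; here the trajectory density has an infinite-dimensional nuisance part (the transition kernel $P$ and reward distribution) that is never estimated. The fix is to observe that because $P$, $d_0$, and the reward law are identical in the sampling distribution and in every candidate proposal, they factor out of every likelihood ratio and do not enter the estimating equations at all, so the effective problem is exactly the finite-dimensional one Henmi et al.\ analyze, with the role of the ``data point'' played by the whole trajectory $H$ and the role of the ``sampler'' played only by the action-conditional model. I would state this factorization carefully as the first lemma of the proof, since it is the one place where the MDP structure (as opposed to a generic i.i.d.\ sampling model) needs to be used, and then the rest is a direct citation of Henmi et al.\ with variables renamed. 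Secondarily, care is needed that the per-timestep products in Equation~(\ref{eq:RIS}) correspond to a single well-defined likelihood over trajectories (they do, via the chain rule for the action sequence conditioned on states), so that ``the MLE in $\Pi_\btheta^n$'' is literally the maximum-likelihood estimator of the trajectory proposal restricted to that family; establishing this identification is what licenses the appeal to their theorem.
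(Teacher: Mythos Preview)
Your proposal is correct and follows essentially the same approach as the paper's proof. The paper likewise reduces to Theorem~1 of Henmi et al.\ by setting $f(h)=g(h)$, $p(h)=\Pr(h\mid\pieval)$, $q(h\mid\btheta)=\Pr(h\mid\pitheta)$, and its Lemma~2 is exactly your ``first lemma'': the observation that, because the initial-state and transition factors are constant in $\btheta$, the action-conditional MLE in Equation~(\ref{eq:RIS}) coincides with the MLE of the full trajectory proposal, which is what licenses the direct citation of Henmi et al.
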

Corollary 1 states that the asymptotic variance of RIS($n$) must be at least as low as that of $\OIS$.

We first present Theorem 1 from Henmi et al.\ \yrcite{henmi2007importance} and adopt their notation for its presentation.
Consider estimating $v = \Exp{p}{f(x)}$ for probability mass function $p$ and real-valued function $f$.
Given parameterized and twice differentiable probability mass function $q(\cdot | \tilde{\btheta})$, we define the ordinary importance sampling estimator of $v$ as $\tilde{v} = \frac{1}{m} \sum_{i=1}^m \frac{p(x_i)}{q(x_i, \tilde{\btheta})}f(x_i)$.
Similarly, define $\hat{v} = \frac{1}{m} \sum_{i=1}^m \frac{p(x_i)}{q(x_i, \hat{\btheta})}f(x_i)$ where $\hat{\btheta}$ is the maximum likelihood estimate of $\tilde{\btheta}$ given the $m$ samples from $q(\cdot | \tilde{\btheta})$.
The following theorem relates the asymptotic variance of $\hat{v}$ to that of $\tilde{v}$. 
\begin{thm}\label{thm:henmi}
\[ \var_A({\hat{v}}) \leq \var_A(\tilde{v}) \]
where $\var_A$ denotes the asymptotic variance.
\end{thm}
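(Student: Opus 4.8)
The plan is to compare the two estimators through their asymptotic (influence-function) linearizations, since both are asymptotically normal and ``asymptotic variance'' refers to the variance of their limiting Gaussian. Write $w(x,\btheta) := p(x)/q(x,\btheta)$ and $f_i := f(x_i)$, so that $\tilde v = \frac{1}{m}\sum_i w(x_i,\tilde\btheta) f_i$ is an ordinary average of i.i.d.\ summands with $\Exp{q}{w(x,\tilde\btheta)f(x)} = v$ under $q(\cdot|\tilde\btheta)$. A direct central limit theorem then gives $\var_A(\tilde v) = \var(w(x,\tilde\btheta)f(x))$, the variance of a single importance-weighted summand. The entire difficulty is that $\hat v$ plugs in the random, data-dependent estimate $\hat\btheta$ in place of $\tilde\btheta$, so $\hat v$ is no longer a simple average; the goal is to show that reusing the samples to estimate $\tilde\btheta$ actually \emph{reduces} variance.

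First I would Taylor-expand $\hat v$ in $\hat\btheta$ about $\tilde\btheta$. Using the identity $\partial_\btheta w(x,\btheta) = -w(x,\btheta)\,s(x,\btheta)$, where $s(x,\btheta) := \partial_\btheta \log q(x,\btheta)$ is the score, the first-order coefficient $\frac{1}{m}\sum_i \partial_\btheta w(x_i,\tilde\btheta) f_i$ converges to $-\Exp{q}{w(x,\tilde\btheta)f(x)\,s(x,\tilde\btheta)} =: -\beta$. Next I would substitute the standard asymptotic expansion of the maximum-likelihood estimator, $\sqrt m\,(\hat\btheta - \tilde\btheta) = I^{-1}\frac{1}{\sqrt m}\sum_i s(x_i,\tilde\btheta) + o_p(1)$, with $I := \Exp{q}{s s^\top}$ the Fisher information. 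Combining the two expansions collapses $\hat v$ into a single influence-function average,
\[ \sqrt m\,(\hat v - v) = \frac{1}{\sqrt m}\sum_i \big[(w_i f_i - v) - \beta^\top I^{-1} s_i\big] + o_p(1). \]
Taking the variance of the limiting average, the cross term contributes $-2\beta^\top I^{-1}\beta$ (since $\Exp{q}{s}=0$ makes $\operatorname{Cov}(w f, s) = \beta$) and the quadratic term contributes $\beta^\top I^{-1} I\, I^{-1}\beta = \beta^\top I^{-1}\beta$, so they combine to yield $\var_A(\hat v) = \var_A(\tilde v) - \beta^\top I^{-1}\beta$. Because $I$ is positive definite, $\beta^\top I^{-1}\beta \ge 0$, which gives the claimed inequality and quantifies the reduction exactly by the covariance $\beta$ between the importance-weighted summand and the score.

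The main obstacle is rigor in the two asymptotic expansions rather than the algebra. I would need regularity conditions guaranteeing consistency and $\sqrt m$-asymptotic normality of the MLE (identifiability, the assumed twice differentiability of $q(\cdot|\btheta)$, a nonsingular Fisher information, and dominated-convergence conditions allowing interchange of differentiation and expectation), together with finiteness of $\var(w(x,\tilde\btheta)f(x))$ and of the score-weighted quantity $w f s$ so the relevant CLTs apply. The delicate step is uniformly controlling the second-order Taylor remainder of $\hat v$ and showing it is $o_p(m^{-1/2})$; this relies on the consistency $\hat\btheta \to \tilde\btheta$ combined with a local envelope (boundedness) condition on $\partial^2_\btheta w \cdot f$ in a neighborhood of $\tilde\btheta$. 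Establishing those remainder bounds is the technical heart of the argument, whereas the variance comparison itself reduces to the positive-definiteness of $I^{-1}$.
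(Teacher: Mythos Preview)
Your argument is correct and is the standard influence-function derivation: Taylor-expand the plug-in estimator around the true parameter, substitute the MLE's one-step expansion $\sqrt m(\hat\btheta-\tilde\btheta)=I^{-1}\frac{1}{\sqrt m}\sum_i s_i+o_p(1)$, and read off the asymptotic variance as $\var_A(\tilde v)-\beta^\top I^{-1}\beta$ with $\beta=\operatorname{Cov}(wf,s)$. The paper, however, does not prove this theorem at all; its proof is a one-line citation, ``See Theorem 1 of Henmi et al.\ (2007).'' So there is nothing in the paper to compare against beyond noting that your sketch is precisely the argument Henmi, Yoshida, and Eguchi give in that reference (they carry out the same delta-method expansion and arrive at the same explicit variance reduction term $\beta^\top I^{-1}\beta$). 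Your discussion of the regularity conditions needed to control the remainder is appropriate and matches the standard assumptions in that literature.
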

%\begin{thm}
%\[ \var_A(v_\mathtt{RIS}) \leq \var_A(v_\mathtt{OIS}) \]
%\end{thm}
%
\begin{proof}
See Theorem 1 of Henmi et al.\ \yrcite{henmi2007importance}.
\end{proof}

Theorem \ref{thm:henmi} shows that the maximum likelihood estimated parameters of the sampling distribution yield an asymptotically lower variance estimate than using the true parameters, $\tilde{\btheta}$.
To specialize this theorem to our setting, we show that the maximum likelihood behavior policy parameters are also the maximum likelihood parameters for the trajectory distribution of the behavior policy.
First specify the class of sampling distribution: $\Pr(h; \btheta) = p(h) w_\btheta(h)$ where $p(h) = d_0(s_0) \prod_{t=1}^{L-1} P(s_t | s_{t-1}, a_{t-1})$ and $w_\btheta(h) = \prod_{t=0}^{L-1} \pitheta(a_t | s_{t-n}, a_{t-n},\dots,s_t)$.
We now present the following lemma:
\begin{lemma}\label{lemma:asym}
\begin{align*}
&\argmax_\btheta \sum_{h \in \mathcal{D}} \sum_{t=0}^{L-1} \log \pitheta(a_t | s_{t-n}, a_{t-n},\dots,s_t)  = \argmax_\btheta \sum_{h \in \mathcal{D}} \log \Pr(h ; \btheta)
\end{align*}
\end{lemma}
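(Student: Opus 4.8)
The plan is to show that the two maximization problems in Lemma~\ref{lemma:asym} have the same objective up to an additive term that does not depend on $\btheta$, so that their $\argmax$ sets coincide. First I would expand the right-hand side objective using the factorization $\Pr(h;\btheta) = p(h) w_\btheta(h)$, which gives
\[
\sum_{h \in \mathcal{D}} \log \Pr(h ; \btheta) = \sum_{h \in \mathcal{D}} \log p(h) + \sum_{h \in \mathcal{D}} \log w_\btheta(h).
\]
The first sum, $\sum_{h \in \mathcal{D}} \log p(h)$, depends only on the initial-state distribution $d_0$ and the transition probabilities $P$, which are fixed features of the MDP and carry no dependence on $\btheta$; hence it is an additive constant with respect to the optimization over $\btheta$ and can be dropped without changing the $\argmax$.

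Next I would expand the second sum using the definition $w_\btheta(h) = \prod_{t=0}^{L-1} \pitheta(a_t \mid s_{t-n}, a_{t-n}, \dots, s_t)$, so that
\[
\sum_{h \in \mathcal{D}} \log w_\btheta(h) = \sum_{h \in \mathcal{D}} \sum_{t=0}^{L-1} \log \pitheta(a_t \mid s_{t-n}, a_{t-n}, \dots, s_t),
\]
which is exactly the left-hand side objective. Combining these two observations, the right-hand objective equals the left-hand objective plus a $\btheta$-independent constant, so
\[
\argmax_\btheta \sum_{h \in \mathcal{D}} \log \Pr(h ; \btheta) = \argmax_\btheta \sum_{h \in \mathcal{D}} \sum_{t=0}^{L-1} \log \pitheta(a_t \mid s_{t-n}, a_{t-n}, \dots, s_t),
\]
which is the claim. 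The only subtlety to flag carefully is the justification that $p(h)$ truly has no $\btheta$-dependence: this relies on the modelling assumption, made just before the lemma, that the parameterization enters only through the policy factor $w_\btheta$ and that the environment dynamics $d_0$ and $P$ are held fixed and unknown-but-not-parameterized. I would state this explicitly so the reader sees that the decomposition of the trajectory likelihood into a "dynamics part" and a "policy part" is what makes the two likelihoods share a maximizer.

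This is essentially a bookkeeping argument rather than a technical one, so I do not anticipate a genuine obstacle; the main thing to get right is being precise about which quantities are and are not functions of $\btheta$, and making sure the index ranges in the product and double sum match the definitions of $p(h)$ and $w_\btheta(h)$ given immediately above the lemma (in particular handling the convention that for $t-n < 0$ the conditioning segment is truncated to the start of the trajectory, which does not affect the argument since it is just a relabelling of the conditioning context). Once Lemma~\ref{lemma:asym} is established, it feeds directly into Corollary~1: the maximum-likelihood behavior-policy parameters $\hat\btheta$ are also the maximum-likelihood parameters for the trajectory sampling distribution $\Pr(h;\btheta)$, so Theorem~\ref{thm:henmi} of Henmi et al.\ applies with $q(\cdot\mid\btheta) = \Pr(\cdot;\btheta)$, $p$ the evaluation-policy trajectory distribution, and $f = g$, yielding the asymptotic variance bound.
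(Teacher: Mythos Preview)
Your proposal is correct and follows essentially the same approach as the paper: both arguments use the factorization $\Pr(h;\btheta) = p(h)\, w_\btheta(h)$ and the observation that $\log p(h)$ is constant in $\btheta$, so the two objectives differ only by an additive constant and share the same $\argmax$. The paper starts from the left-hand side and adds the constant dynamics terms to reach $\log \Pr(h;\btheta)$, whereas you start from the right-hand side and split off the constant; this is the same decomposition read in the opposite direction.
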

\begin{proof}
\begin{align*}
& \argmax_\btheta \sum_{h \in \mathcal{D}} \sum_{t=0}^{L-1} \log \pitheta(a_t | s_{t-n}, a_{t-n},\dots,s_t) \\
=& \argmax_\btheta \sum_{h \in \mathcal{D}} \sum_{t=0}^{L-1} \log \pitheta(a_t | s_{t-n}, a_{t-n},\dots,s_t) + \underbrace{\log d(s_0) + \sum_{t=1}^{L-1} \log P(s_t | s_{t-1}, a_{t-1})}_\text{const w.r.t. $\btheta$} \\
=& \argmax_\btheta \sum_{h \in \mathcal{D}} \log w_\btheta(h) + \log p(h) \\
 \btheta =& \argmax_\btheta \sum_{h \in \mathcal{D}} \log \Pr(h; \theta)
\end{align*}
%\equiv& \argmax_\btheta \sum_{h \in \mathcal{D}} \sum_{t=0}^L \log \pitheta(a_t | s_{t-n}, a_{t-n},\dots,s_t) + \mathtt{const} \\
\end{proof}

%We say that $\pi_{\btheta_b} \in \Pi_\btheta$ if $\exists \btheta$ such that $\pi_{\btheta_b}(a_t | s_{t-n}, a_{t-n},\dots,s_t) = \pitheta(a_t | s_{t-n}, a_{t-n},\dots,s_t)$.

Finally, we combine Lemma \ref{lemma:asym} with Theorem 1 to prove Corollary 1:
\setcounter{corollary}{0}

\begin{corollary}
Let $\Pi_\btheta^n$ be a class of policies, $\pitheta(\cdot | s_{t-n}, a_{t-n},\dots, s_t)$ that are twice differentiable with respect to $\btheta$. If $\exists \btheta \in \Pi_\btheta^n$ such that $\pitheta = \pib$ % and $\pi_\mathcal{D} = \pi_{\hat{\btheta}}$ where $\hat{\btheta} = \argmax_\btheta \sum_{H \in \mathcal{D}} \sum_{t=0}^{L-1} \log \pitheta(a | H_{t-n:t})$
%If $\pib \in \Pi^n$ and $\pi_\mathcal{D}$ maximizes the likelihood of $\mathcal{D}$
 then \[\var_A({\RIS(n)(\pieval, \mathcal{D})}) \leq \var_A({\IS(\pieval, \mathcal{D}, \pib)})\]
where $\var_A$ denotes the asymptotic variance.
\end{corollary}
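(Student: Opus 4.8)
The plan is to obtain the corollary as a direct specialization of Theorem~\ref{thm:henmi} of Henmi et al., taking the ``sample'' in their framework to be an entire trajectory. First I would fix the correspondence. Let Henmi's sample be $x \coloneqq H$, let Henmi's target distribution be the trajectory distribution of $\pieval$, namely $\Pr(h \mid \pieval) = p(h)\, w_{\pieval}(h)$ with $w_{\pieval}(h) = \prod_{t=0}^{L-1}\pieval(a_t \mid s_t)$ and $p(h) = d_0(s_0)\prod_{t=1}^{L-1}P(s_t\mid s_{t-1},a_{t-1})$ as defined just before Lemma~\ref{lemma:asym}, and let Henmi's integrand be $f(h) \coloneqq g(h)$. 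Then Henmi's target quantity is $\Expa{g(H)\mid H\sim\pieval} = v(\pieval)$. Take Henmi's parameterized sampling family to be $\{\Pr(\cdot;\btheta)\}$ with $\Pr(h;\btheta) = p(h)\,w_\btheta(h)$ and $w_\btheta(h) = \prod_{t=0}^{L-1}\pitheta(a_t\mid H_{t-n:t})$. By hypothesis there is $\tilde{\btheta}$ with $\pi_{\tilde{\btheta}} = \pib$, so $\mathcal{D}$ is an i.i.d.\ sample of size $m$ from $\Pr(\cdot;\tilde{\btheta})$, which is exactly the situation of Theorem~\ref{thm:henmi}.

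Second, I would check that Henmi's two estimators are literally $\OIS$ and $\RIS(n)$ under this encoding. In the importance ratio $\Pr(h\mid\pieval)/\Pr(h;\btheta)$ the environment factor $p(h)$ cancels, leaving $w_{\pieval}(h)/w_\btheta(h) = \prod_{t=0}^{L-1}\pieval(a_t\mid s_t)/\pitheta(a_t\mid H_{t-n:t})$. Hence Henmi's ``true-parameter'' estimator $\tilde v = \tfrac1m\sum_i \tfrac{\Pr(h_i\mid\pieval)}{\Pr(h_i;\tilde{\btheta})}g(h_i)$ equals $\IS(\pieval,\mathcal{D},\pib)$ (using that $\pi_{\tilde{\btheta}}=\pib$ depends only on $s_t$), and Henmi's ``estimated-sampler'' estimator $\hat v = \tfrac1m\sum_i \tfrac{\Pr(h_i\mid\pieval)}{\Pr(h_i;\hat{\btheta})}g(h_i)$ equals the $\RIS(n)$ estimate, provided $\hat{\btheta}$ is the parameter that $\RIS(n)$ uses. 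The one subtlety is the definition of $\hat{\btheta}$: in Theorem~\ref{thm:henmi} it is the maximum-likelihood estimate for the sampling family, $\hat{\btheta} = \argmax_\btheta \sum_{h\in\mathcal{D}}\log\Pr(h;\btheta)$, whereas $\RIS(n)$ chooses the parameter maximizing the behavior-policy log-likelihood $\sum_{h\in\mathcal{D}}\sum_{t=0}^{L-1}\log\pitheta(a_t\mid H_{t-n:t})$. Lemma~\ref{lemma:asym} is precisely the identity that these two maximizers coincide, since the trajectory log-likelihood differs from the policy log-likelihood only by the $\btheta$-free terms $\log d_0(s_0)+\sum_t\log P(s_t\mid s_{t-1},a_{t-1})$. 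Therefore the $\hat{\btheta}$ of $\RIS(n)$ is exactly Henmi's $\hat{\btheta}$, and $\hat v = \RIS(n)(\pieval,\mathcal{D})$.

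Third, before invoking Theorem~\ref{thm:henmi} I would verify its regularity hypotheses for the trajectory family: twice differentiability of $\Pr(\cdot;\btheta)$ in $\btheta$, which holds because it is a finite product of the factors $\pitheta(a_t\mid H_{t-n:t})$, twice differentiable by assumption, times a $\btheta$-free factor; and the standard maximum-likelihood regularity conditions (consistency and $\sqrt m$-asymptotic normality of $\hat{\btheta}$, nonsingular Fisher information near $\tilde{\btheta}$, and the integrability conditions on $f$ under $\Pr(\cdot\mid\pieval)$), which hold because $\pib$ is realized in the interior of the parameterized class. With these in place, Theorem~\ref{thm:henmi} gives $\var_A(\RIS(n)(\pieval,\mathcal{D})) = \var_A(\hat v) \le \var_A(\tilde v) = \var_A(\IS(\pieval,\mathcal{D},\pib))$, which is the claim.

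The step I expect to be the main obstacle is the reduction in the second paragraph: matching the estimation objective of $\RIS(n)$ (a sum of per-timestep conditional log-likelihoods) to Henmi's estimation objective (a full-trajectory log-likelihood), so that the parameter feeding the two estimators is literally the same. Lemma~\ref{lemma:asym} handles this, after which the remainder is bookkeeping --- cancelling the environment dynamics in the importance ratio and checking differentiability --- and the inequality is immediate from Theorem~\ref{thm:henmi}. I would also flag, as the $\var_A$ notation already signals, that this is a purely asymptotic statement: the expansion underlying Theorem~\ref{thm:henmi} carries $o_p(1)$ and $o(m)$ remainder terms that are not controlled here, so nothing is claimed about finite-sample variance.
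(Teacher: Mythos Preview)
Your proposal is correct and follows essentially the same route as the paper: identify Henmi's sample, target, integrand, and parameterized sampling family with the trajectory, $\Pr(\cdot\mid\pieval)$, $g$, and $\Pr(\cdot;\btheta)$ respectively, invoke Lemma~\ref{lemma:asym} to equate the $\RIS(n)$ policy-log-likelihood maximizer with Henmi's full-trajectory MLE $\hat{\btheta}$, and then read off the inequality from Theorem~\ref{thm:henmi}. The paper's own proof is a two-line version of exactly this argument; your additional remarks (cancellation of $p(h)$ in the ratio, the regularity checks, and the caveat about $o_p(1)$ terms) are correct elaborations rather than departures.
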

\begin{proof}
Define $f(h) = g(h)$, $p(h) = \Pr(h | \pieval)$ and $q(h | \btheta) = \Pr(h | \pitheta)$. Lemma \ref{lemma:asym} implies that:
 \[
\displaystyle\hat{\btheta} = \argmax_{\btheta \in \Pi_\btheta} \sum_{h \in \mathcal{D}} \sum_{t=0}^L \log \pitheta(a_t | s_t)
\]
 is the maximum likelihood estimate of $\tilde{\btheta}$ (where $\pi_{\tilde{\btheta}} = \pib$ and $\Pr(h|\tilde{\btheta})$ is the probability of $h$ under $\pib$) and then Corollary 1 follows directly from Theorem \ref{thm:henmi}.
\end{proof}

Note that for RIS(n) with $n > 0$, the condition that $\pi_{\tilde{\btheta}} \in \Pi^n$ can hold even if the distribution of $A_t \sim \pi_{\tilde{\btheta}}$ (i.e., $A_t \sim \pib$) is only conditioned on $s_t$.
This condition holds when $\exists \pitheta \in \Pi^n$ such that $\forall s_{t-n}, a_{t-n}, \dots a_{t-1}$:
\[ 
 \pi_{\tilde{\btheta}}(a_t | s_t) = \pitheta(a_t | s_{t-n}, a_{t-n},\dots, s_t),
\]
i.e., the action probabilities only vary with respect to $s_t$.

\section{Connection to the REG estimator}

In this appendix we show that $\RIS(L-1)$ is an approximation of the REG 
estimator studied by Li et al.\ \yrcite{li2015toward}.
This connection is notable because Li et al.\ showed $\REG$ is asymptotically minimax optimal, however, in MDPs, $\REG$ requires knowledge of the environment's transition and initial state distribution probabilities while $RIS(L-1)$ does not.
For this discussion, we recall the definition of the probability of a trajectory
 for a given MDP and policy:
\begin{gather*}
\Pr(h | \pi) = d_0(s_0) \pi(a_0 | s_0) P(s_1|s_0, a_0) \cdots P(s_{L-1} | s_{L-2}, a_{L-2})\pi(a_{L-1} | s_{L-1}).
\end{gather*}
We also define $\mathcal{H}$ to be the set of all state-action 
trajectories possible under $\pib$ of length $L$: $s_0, a_0, ... s_{L-1}, a_{L-1}$.

Li et al.\ introduce the regression estimator (REG) for multi-armed bandit problems \yrcite{li2015toward}.
This method estimates the mean reward for each action as $\hat{r}(a, \mathcal{D})$ and then 
computes the REG estimate as: 
\[
\mathtt{REG}(\pieval, \mathcal{D}) = \sum_{a \in \mathcal{A}} \pieval(a) \hat{r}(a, \mathcal{D}).
\]
This estimator is identical to RIS(0) in multi-armed bandit problems \cite{li2015toward}.
The extension of REG to finite horizon MDPs estimates the mean 
return for each trajectory as $\hat{g}(h, \mathcal{D})$ and then computes the estimate:
\[
\mathtt{REG}(\pieval, \mathcal{D}) = \sum_{h \in \mathcal{H}} \Pr(h | \pieval) \hat{g}(h, \mathcal{D}).
\]
Since this estimate uses $\Pr(h | \pieval)$ it requires knowledge of the initial state distribution, $d_0$, and transition probabilities, $P$.
%
% For MDPs, REG and $\RIS$ are \textit{not} identical.

We now elucidate a relationship between $\RIS(L-1)$ and REG even though they are different estimators.
Let $c(h)$ denote the number of times that trajectory $h$ appears in $\mathcal{D}$.
We can rewrite $\REG$ as an importance sampling method with a count-based estimate of the probability of a trajectory in the denominator:
\begin{align}
\mathtt{REG}(\pieval, \mathcal{D}) =& \sum_{h \in \mathcal{H}} \Pr(h | \pieval) \hat{g}(h, \mathcal{D}) \\ =& \frac{1}{m} \sum_{h \in \mathcal{H}} c(h) \frac{\Pr(h | \pieval)}{c(h) / m} \hat{g}(h, \mathcal{D}) \\
                                                                              =& \frac{1}{m} \sum_{i=1}^m\frac{\Pr(h_i | \pieval)}{c(h_i) / m} g(h_i) \label{eq1}
\end{align}

The denominator in (\ref{eq1}) can be re-written as a telescoping product to obtain 
an estimator that is similar to RIS($L-1$):
\begin{align*}
\mathtt{REG}(\pieval, \mathcal{D}) =& \frac{1}{m} \sum_{i=1}^m\frac{\Pr(h_i | \pieval)}{c(h_i) / m}g(h_i) \\ =&
 \frac{1}{m} \sum_{i=1}^m\frac{\Pr(h_i | \pieval)}{\frac{c(s_0)}{m}\frac{c(s_0, a_0)}{c(s_0)}\cdots\frac{c(h_i)}{c(h_i / a_{L-1})}}g(h_i) \\=&
  \frac{1}{m} \sum_{i=1}^m\frac{d_0(s_0) \pieval(a_0 | s_0) P(s_1|s_0, a_0) \cdots P(s_{L-1} | s_{L-2}, a_{L-2})\pieval(a_{L-1} | s_{L-1})}{\hat{d}(s_0)\pidata(a_0|s_0)\hat{P}(s_1 | s_0, a_0) \cdots \hat{P}(s_{L-1} | h_{0:L-1}) \pidata(a_{L-1} | h_{i:j}) }g(h_i).
\end{align*}
This expression differs from RIS($L-1$) in two ways:
\begin{enumerate}
\item The numerator includes the initial state distribution and 
transition probabilities of the environment.
\item The denominator includes count-based estimates of the initial state distribution and transition probabilities of the environment where the transition probabilities are conditioned on all past states and actions.
\end{enumerate}
If we assume that the empirical estimates of the environment probabilities in 
the denominator are equal to the true environment probabilities then these 
factors cancel and we obtain the RIS($L-1$) estimate.
This assumption will almost always be false except in deterministic environments.
However, showing that RIS($L-1$) is approximating REG suggests that $\RIS(L-1)$
may have similar theoretical properties to those elucidated for REG by Li et al.\ \yrcite{li2015toward}.
Our SinglePath experiment (See Figure 2 in the main text) supports this conjecture: RIS($L-1$) has high bias in the low to medium sample size but have asymptotically lower $\mse$ compared to other methods.
$\REG$ has even higher bias in the low to medium sample size range but has asymptotically lower $\mse$ compared to $\RIS(L-1)$.
$\RIS$ with smaller $n$ appear to decrease the initial bias but have larger $\mse$ as the sample size grows.
The asymptotic benefit of $\RIS$ for all $n$ is also corroborated by Corollary 1 in Appendix B though Corollary 1 does \textit{not} tell us anything about how different $\RIS$ methods compare asymptotically.
The asymptotic benefit of $\REG$ compared to $\RIS$ methods can be understood as $\REG$ correcting for sampling error in both the action selection and state transitions.

\section{Sampling Error with Continuous Actions}

In Section 3 of the main text we discussed how ordinary importance sampling can suffer from sampling error.
Then, in Section 4, we presented an example showing how $\RIS$ corrects for sampling error in $\mathcal{D}$ in deterministic and finite MDPs.
Most of this discussion assumed that the state and action spaces of the MDP were finite.
Here, we discuss sampling error in continuous action spaces.
%The case where $\RIS(L-1)$ provides this optimal correction is only applicable once all trajectories appear in $\mathcal{D}$.
%%
%For continuous actions, it is impossible to analyze the setting where all trajectories have been sampled and so we present an alternative discussion that is specific to continuous domains.
%
The primary purpose of this discussion is intuition and we limit discussion to a setting that can be easily visualized.
We consider a deterministic MDP with scalar, real-valued actions, reward $R: \mathcal{A} \rightarrow \mathbb{R}$, and $L=1$.

We assume the support of $\pib$ and $\pieval$ is bounded and for simplicity assume the support to be $[0,1]$.
Policy evaluation is equivalent to estimating the integral: 
\begin{equation}\label{eq:vpi}
v(\pieval) = \int_0^1 R(a) \pieval(a) da
\end{equation}
and the ordinary importance sampling estimate of this quantity with $m$ samples from $\pib$ is:
\begin{equation}\label{eq:mc-vpi}
\frac{1}{m} \sum_{i=1}^m \frac{\pieval(a_i)}{\pib(a_i)} R(a_i).
\end{equation}

Even though the $\OIS$ estimate is a sum over a finite number of samples, we show it is exactly equal to an integral over a particular piece-wise function.
We assume (w.l.o.g) that the $a_i$'s are in non-decreasing order, ($a_0 <= a_i <= a_m$).
Imagine that we place the $R(a_i)$ values uniformly across the interval $[0,1]$ so that they divide the range $[0,1]$ into $m$ equal bins.
In other words, we maintain the relative ordering of the action samples but ignore the spatial relationship between samples.
We now define piece-wise constant function $\bar{R}_{\OIS}$ where $\bar{R}_{\OIS}(a) = R(a_i)$ if $a$ is in the $i^\text{th}$ bin.
The ordinary importance sampling estimate is exactly equal to the integral $\int_0^1 \bar{R}_{\OIS}(a) da$.

\begin{figure*}[ht!]
\subfigure[Policy and Reward]{\includegraphics[width=0.33\linewidth]{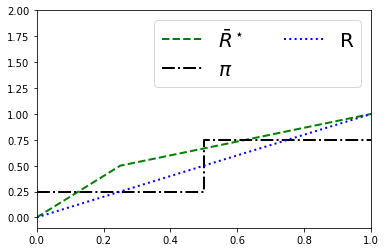}\label{fig:continuous:setup}}
\subfigure[10 Sample Approximation]{\includegraphics[width=0.33\linewidth]{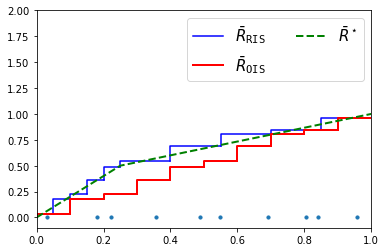}\label{fig:continuous:10}}
\subfigure[200 Sample Approximation]{\includegraphics[width=0.33\linewidth]{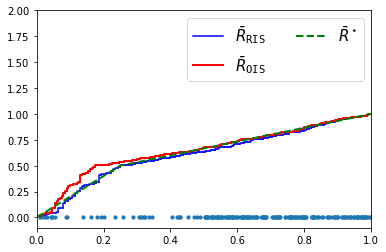}\label{fig:continuous:200}}
\caption{Policy evaluation in a continuous armed bandit task. Figure \ref{fig:continuous:setup} shows a reward function, $R$, and the pdf of a policy, $\pi$, with support on the range $[0,1]$. With probability $0.25$, $\pi$ selects an action less than $0.5$ with uniform probability; otherwise $\pi$ selects an action greater than $0.5$. The reward is equal to the action chosen.
All figures show $\bar{R}^\star$: a version of $R$ that is stretched according to the density of $\pi$; since the range $[0.5,1]$ has probability $0.75$, $R$ on this interval is stretched over $[0.25, 1]$.
 Figure \ref{fig:continuous:10} and \ref{fig:continuous:200} show $\bar{R}^\star$ and the piece-wise $\bar{R}_{\OIS}$ and $\bar{R}_{\RIS}$ approximations to $\bar{R}^\star$ after $10$ and $200$ samples respectively.}
\label{fig:continuous}
\end{figure*}
%0.6 <- shown
%$\bar{R}_\mathtt{RIS}$ 3.4260270914754125e-05
%$\bar{R}_\mathtt{OIS}$ 0.03536159782641827
%0.4
%RIS 1.4328159877079073e-06
%OIS 0.007030425483496322
%0.31
%$\bar{R}_\mathtt{RIS}$ 0.00017240402120564043
%$\bar{R}_\mathtt{OIS}$ 0.0019435696183797162
%0.29
%RIS 0.0002548336334254752
%OIS 0.00135249422301035

It would be reasonable to assume that $\bar{R}_{\OIS}(a)$ is approximating $R(a) \pieval(a)$ since the ordinary importance sampling estimate (\ref{eq:mc-vpi}) is approximating (\ref{eq:vpi}), i.e., $\displaystyle \lim_{m\rightarrow \infty} \bar{R}_{\OIS}(a) = R(a) \pieval(a)$.
In reality, $\bar{R}_{\OIS}$ approaches a \textit{stretched} version of $R$ where areas with high density under $\pieval$ are stretched and areas with low density are contracted.
We call this stretched version of $R$, $\bar{R}^\star$.
The integral of $\int_0^1 \bar{R}^\star(a) da$ is $v(\pieval)$.

Figure \ref{fig:continuous:setup} gives a visualization of an example $\bar{R}^\star$ using on-policy Monte Carlo sampling from an example $\pieval$ and linear $R$.
In contrast to the true $\bar{R}^\star$, the $\OIS$ approximation to $\bar{R}$, $\bar{R}_{\OIS}$ stretches ranges of $R$ according to the number of samples in that range: ranges with many samples are stretched and ranges without many samples are contracted.
As the sample size grows, any range of $R$ will be stretched in proportion to the probability of getting a sample in that range.
For example, if the probability of drawing a sample from $[a,b]$ is $0.5$ then $\bar{R}^\star$ stretches $R$ on $[a,b]$ to cover half the range $[0,1]$.
Figure \ref{fig:continuous} visualizes $\bar{R}_{\OIS}$ the $\OIS$ approximation to $\bar{R}^\star$ for sample sizes of $10$ and $200$.

In this analysis, sampling error corresponds to over-stretching or under-stretching $R$ in any given range.
The limitation of ordinary importance sampling can then be expressed as follows: given $\pieval$, we know the correct amount of stretching for any range and yet $\OIS$ ignores this information and stretches based on the empirical proportion of samples in a particular range.
On the other hand, $\RIS$ first divides by the empirical pdf (approximately undoing the stretching from sampling) and then multiplies by the true pdf to stretch $R$ a more accurate amount.
Figure \ref{fig:continuous} also visualizes the $\bar{R}_{\RIS}$ approximation to $\bar{R}^\star$ for sample sizes of $10$ and $200$.
In this figure, we can see that $\bar{R}_{\RIS}$ is a closer approximation to $\bar{R}^\star$ than $\bar{R}_{\OIS}$ for both sample sizes.
In both instances, the mean squared error of the $\RIS$ estimate is less than that of the $\OIS$ estimate.

Since $R$ may be unknown until sampled, we will still have non-zero $\mse$.
However the standard $\OIS$ estimate has error due to \textit{both} sampling error and unknown $R$ values.

\section{Extended Empirical Description}

In this appendix we provide additional details for our experimental domains. Code is provided at \url{https://github.com/LARG/regression-importance-sampling}.

\paragraph{SinglePath:}
This environment is shown in Figure \ref{fig:singlepath-appendix} with horizon $L=5$.
In each state, $\pib$ selects action, $a_0$, with probability $p=0.6$ and $\pieval$ selects action, $a_0$, with probability $1 - p=0.4$.
Action $a_0$ causes a deterministic transition to the next state. Action $a_1$ causes a transition to the next state with probability $0.5$, otherwise, the agent remains in its current state.
The agent receives a reward of $1$ for action $a_0$ and $0$ otherwise.
RIS uses count-based estimation of $\pib$ and REG uses count-based estimation of trajectories. REG is also given the environment's transition matrix, $P$.

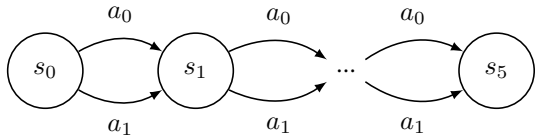
\begin{figure}
\centering
\begin{tikzpicture}[-latex ,auto ,node distance =1.75 cm and 2.75cm ,
semithick ,
state/.style ={ circle ,top color =white , bottom color = white,
draw,black , text=black , minimum width =1 cm}, inf/.style = {}]
\node[state] (start) at (0.0, 0.0){$s_0$};
\node[state] (next) at (2.0, 0.0) {$s_1$};
\node[inf] (dots) at (4.0, 0.0) {...};
\node[state] (end) at (6.0, 0.0) {$s_5$};
%\node[inf] (end_inf) at (6.0, 0.0) {};
%\node[state] (big_reward) at (4.5,0.75){$-100$};
%\node[state] (small_reward)  at (4.5, -0.75) {$-1$};
\path (start) edge [bend left] node[above=0.1 cm,align=left] {$a_0$} (next);
\path (start) edge [bend right] node[below=0.1 cm,align=left] {$a_1$} (next);
%\path (start) edge [bend left] node[below=0.01 cm,align=left] {$a_0$} (start);
%\path (start) edge [bend right] node[above=0.01 cm,align=left] {$a_1$} (next);
\path (next) edge [bend left] node[above=0.1 cm,align=left] {$a_0$} (dots);
\path (next) edge [bend right] node[below=0.1 cm,align=left] {$a_1$} (dots);
\path (dots) edge [bend left] node[above=0.1 cm,align=left] {$a_0$} (end);
\path (dots) edge [bend right] node[below=0.1 cm,align=left] {$a_1$} (end);
%\path (end) edge [bend left] node[above=0.1 cm,align=left] {$a_0$} (end_inf);
%\path (end) edge [bend right] node[below=0.1 cm,align=left] {$a_1$} (end_inf);
\end{tikzpicture}
\caption{The SinglePath MDP referenced in Section 4 of the main text. \textbf{Not shown:} If the agent takes action $a_1$ it remains in its current state with probability $0.5$.}
\label{fig:singlepath-appendix}
\end{figure}

\paragraph{Gridworld:}

This domain is a $4 \times 4$ Gridworld with a terminal state with reward $100$ at $(3, 3)$, a state with reward $-10$ at $(1,1)$, a state with reward $1$ at $(1, 3)$, and all other states having reward $-1$.
The domain has been used in prior off-policy policy evaluation work \cite{thomas2015safe,thomas2016data-efficient,hanna2017data-efficient, farajtabar2018more}.
The action set contains the four cardinal directions and actions move the agent in its intended direction (except when moving into a wall which produces no movement).
The agent begins in $(0,0)$, $\gamma=1$, and $L=100$.
All policies use a softmax action selection distribution with temperature $1$ and a separate parameter, $\theta_{sa}$, for each state, $s$, and action $a$. 
The probability of taking action $a$ in state $s$ is given by:
\[ \pi(a|s) = \frac{e^{\theta_{sa}}}{\sum_{a^\prime \in \mathcal{A}} e^{\theta_{sa^\prime}}}\]
The first set of experiments uses a behavior policy, $\pib$, that can reach the high reward terminal state and an evaluation policy, $\pieval$, that is the same policy with lower entropy action selection.
The second set of experiments uses the same behavior policy as both behavior and evaluation policy.
$\RIS$ estimates the behavior policy with the empirical frequency of actions in each state.
This domain allows us to study $\RIS$ separately from questions of function approximation.

\paragraph{Linear Dynamical System}

This domain is a point-mass agent moving towards a goal in a two dimensional world by setting $x$ and $y$ acceleration.
The state-space is the agent's $x$ and $y$ position and velocity.
The agent acts for $L=20$ time-steps under linear-gaussian dynamics and receives a reward that is proportional to its distance from the goal.
Specifically, if $\mathbf{s}_t$ is the agent's state vector and it takes action $\mathbf{a}_t$, then the resulting next state is:
\[
\mathbf{s}_{t+1} = A \cdot \mathbf{s}_t + B \cdot \mathbf{a}_t + \mathbf{\epsilon}_t
\]
where $\mathbf{\epsilon}_t \sim \mathcal{N}(0, I)$, $A$ is the identity matrix, and 
\[
B = \begin{bmatrix}
    0.5  & 0   \\
    0    & 0.5 \\
    1    & 0   \\
    0    & 1
\end{bmatrix}.
\]

The agent's policy is a linear map from state features to the mean of a Gaussian distribution over actions.
For the state features, we use second order polynomial basis functions so that policies are non-linear in the state features but we can still estimate $\pidata$ efficiently with ordinary least squares.
We obtain a basic policy by optimizing the linear weights of this policy for 10 iterations of the Cross-Entropy method \cite{rubinstein2013cross}.
The evaluation policy uses a standard deviation of $0.5$ and the true $\pib$ uses a standard deviation of $0.6$.

\paragraph{Continuous Control}
We also use two continuous control tasks from the OpenAI gym: Hopper and HalfCheetah.\footnote{For these tasks we use the Roboschool versions: \url{https://github.com/openai/roboschool}}
The state and action dimensions of each task are shown in Table \ref{tab:dimensions}.
\begin{table}[h!]
\centering
\begin{tabular}{c|c|c}
Environment  & State Dimension & Action Dimension \\ \hline
Hopper       & 15              & 3                \\
Half Cheetah & 26              & 6                \\
\end{tabular}
\caption{State and action dimension for each OpenAI Roboschool environment.}
\label{tab:dimensions}
\end{table}
In each task, we use neural network policies with $2$ layers of $64$ hidden units each for $\pieval$ and $\pib$.
Each policy maps the state to the mean of a Gaussian distribution with state-independent standard deviation.
We obtain $\pieval$ and $\pib$ by running the OpenAI Baselines \cite{baselines} version of proximal policy optimization (PPO) \cite{schulman2017proximal} and then selecting two policies along the learning curve. 
For both environments, we use the policy after $30$ updates for $\pieval$ and after $20$ updates for $\pib$.
These policies use $\tanh$ activations on their hidden units since these are the default in the OpenAI Baselines PPO implementation.

$\RIS$ estimates the behavior policy with gradient descent on the negative log-likelihood of the neural network.
Specifically, we interpret the neural network outputs, $\mu(s)$, as the mean of a multi-variate Gaussian distribution with diagonal covariance matrix.
We use a state-independent parameter vector, $\sigma$, to represent the log-standard deviation of the Gaussian distribution.
Given $m$, state-action pairs, $\RIS$ uses the loss function:
\[
\loss = \sum_{i=1}^m  0.5 ((a_i - \mu(s_i)) / e^\sigma) ^ 2 + \sigma
\]
Minimizing $\loss$ is equivalent to minimizing a squared-error loss function with regards to estimating $\mu$.

In our experiments we use a learning rate of $1\times 10^{-3}$ and L2-regularization with a weight of $0.02$. The multi-layer behavior policies learned by $\RIS$ use relu activations.
The specific architectures considered for $\pidata$ have either $0$, $1$, $2$, or $3$ hidden layers with $64$ units in each hidden layer.

In these domains we only consider a batch size of $400$ trajectories for estimating $\pidata$ and computing the policy value estimate.
For determining early stopping and measuring validation error we use a separate batch of $80$ trajectories (20\% of the policy evaluation data).

\section{Extended Empirical Results}

This appendix includes two additional plots that space constraints limited from the main text.

\subsection{Importance Sampling Variants}

This appendix presents additional importance sampling methods that are implemented with both $\OIS$ weights and $\RIS$ weights.
Specifically, we implement the following:
\begin{itemize}
\item The ordinary importance sampling estimator described in Section 2.
\item The weighted importance sampling estimator ($\WIS$) \cite{precup2000eligibility} that normalizes the importance weights with their sum.
%:
%\[
%v_\mathtt{WIS} = \frac{1}{\sum_{h \in \mathcal{D}} \rho_h} \sum{h \in H} \rho_h g(h)
%\]
%where $\rho_h$ is the importance weight for trajectory $h$.
\item Per-decision importance sampling ($\PDIS$) \cite{precup2000eligibility} that importance samples the individual rewards.
\item The doubly-robust ($\DR$) estimator \cite{jiang2016doubly,thomas2016data-efficient} that uses a model of $P$ and $r$ to lower the variance of $\PDIS$.
\item The weighted doubly robust ($\WDR$) estimator \cite{thomas2016data-efficient} that uses weighted importance sampling to lower the variance of the doubly robust estimator.
\end{itemize}
Since $\DR$ and $\WDR$ require a model of the environment, we estimate a count-based model with half of the available data in $\mathcal{D}$.

Figure \ref{fig:gridworld-appendix} gives results for all $5$ of these $\IS$ variants implemented with both $\RIS$ weights and $\OIS$ weights.
Figure \ref{fig:gridworld-onpol-appendix} gives the same results except for the on-policy setting. Note that in the on-policy setting, $\PDIS$ and $\WIS$ are identical to $\IS$ and $\WDR$ is identical to $\DR$ when implemented with $\OIS$ weights. Thus we only present the $\RIS$ versions of these methods.
In addition to the results for ordinary $\IS$, $\WIS$, and $\WDR$ that are also in the main text, Figure \ref{fig:gw-appendix} shows $\RIS$ weights improve $\DR$ and $\PDIS$.

\begin{figure}[h!]
\centering
\subfigure[Gridworld Off-Policy]{\includegraphics[width=0.45\linewidth]{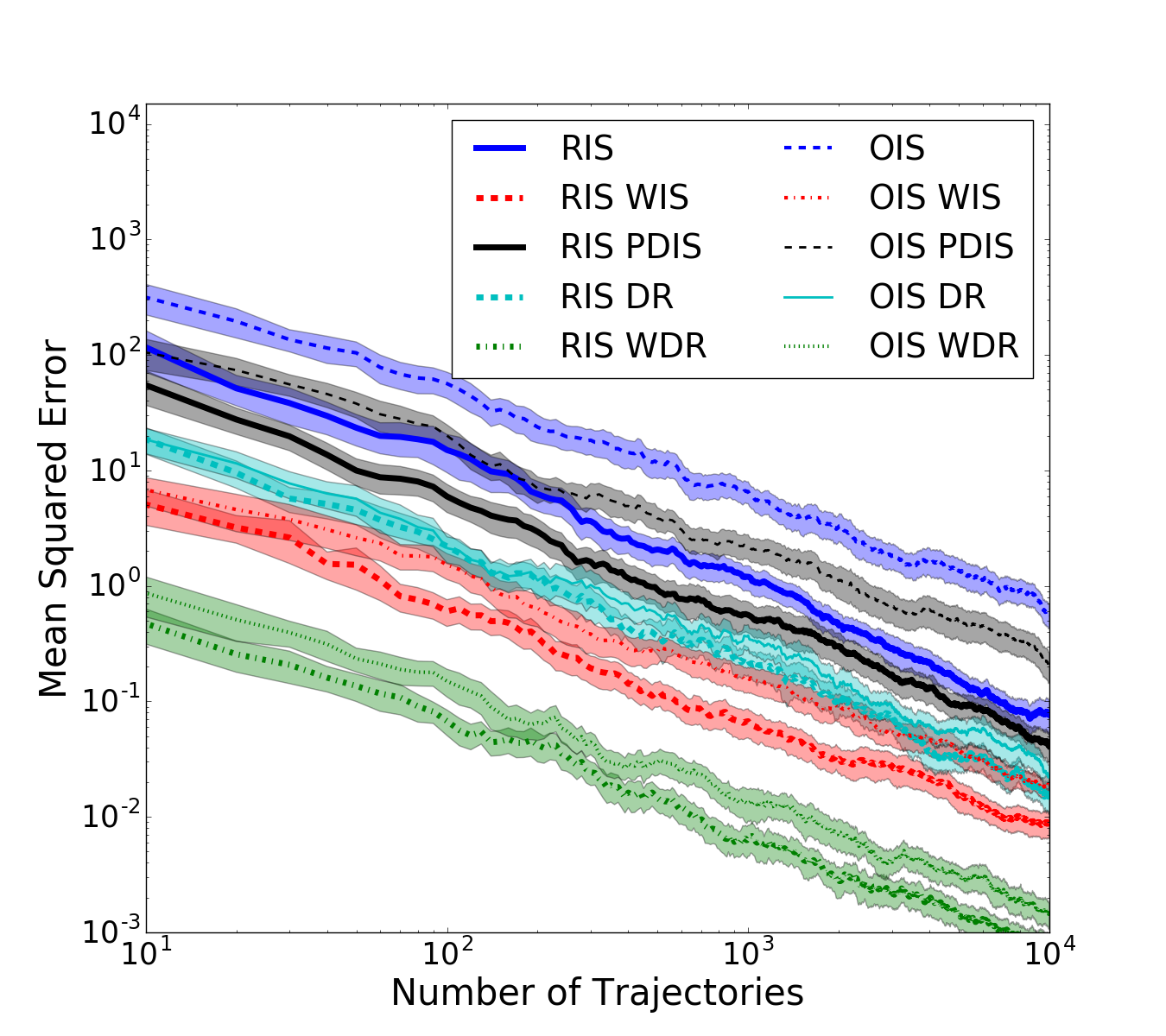}\label{fig:gridworld-appendix}}
\subfigure[Gridworld On-Policy]{\includegraphics[width=0.45\linewidth]{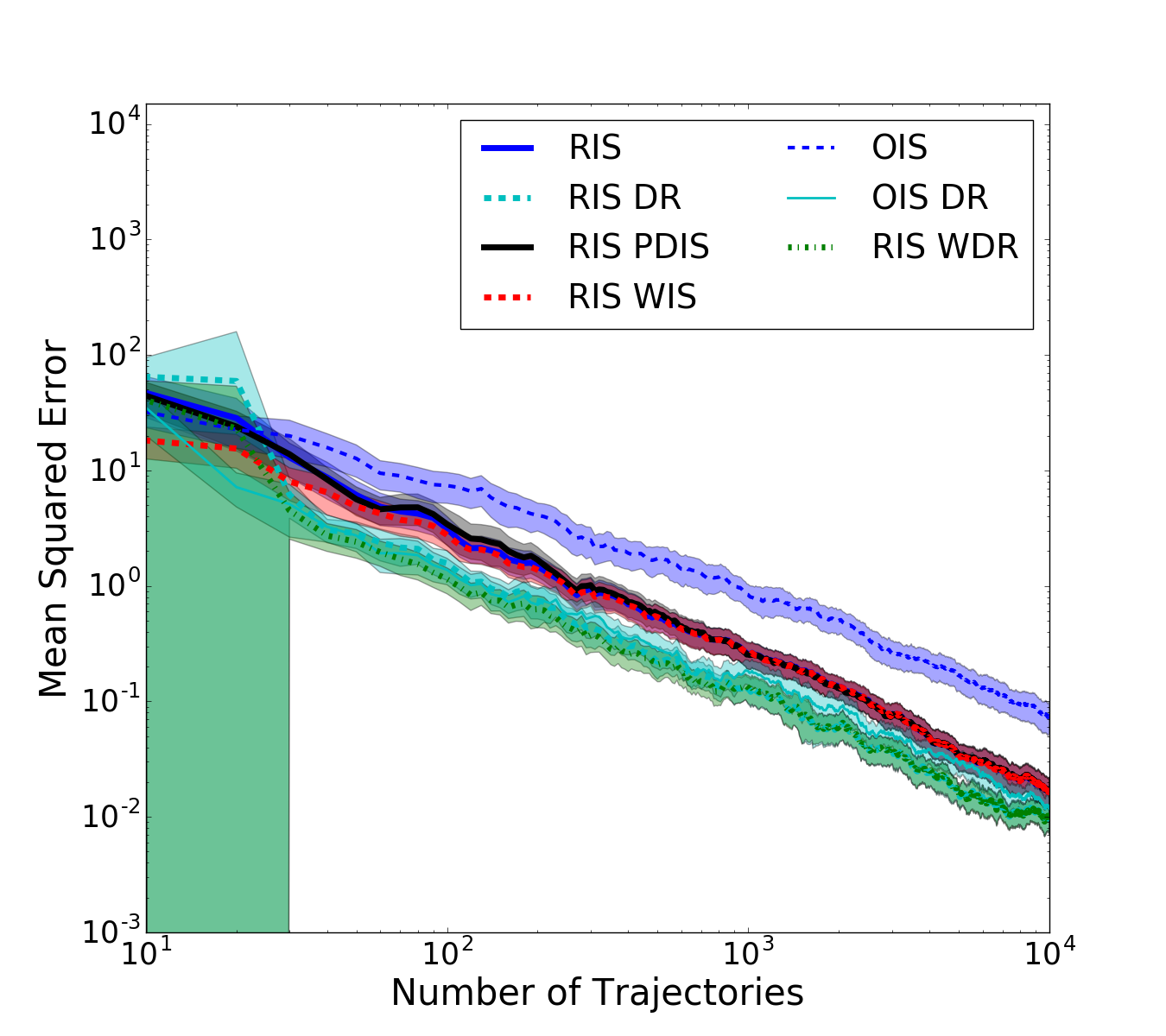}\label{fig:gridworld-onpol-appendix}}
\caption{Policy evaluation results for Gridworld. In all subfigures, the x-axis is the number of trajectories collected and the y-axis is mean squared error. Axes are log-scaled. The shaded region gives a 95\% confidence interval. The main point of comparison is the $\RIS$ variant of each method to the $\OIS$ variant of each method, e.g., $\RIS$ $\WIS$ compared to $\OIS$ $\WIS$. Results are averaged over $100$ trials.\label{fig:gw-appendix}}
\end{figure}

\subsection{Gradient Descent Policy Estimation}

This appendix shows how the $\mse$ of $\RIS$ changes during estimation of $\pidata$ in the HalfCheetah domain.
Figure \ref{fig:fit_exp-cheetah} gives the results.
As in the Hopper domain, we see that the minimal validation loss policy and the minimal $\mse$ policy are misaligned.
The $\RIS$ estimate initially over-estimates the policy value and then begins under-estimating.
Further discussion of these observations are given in Section 6 of the main text.

\begin{figure}
\centering
\includegraphics[width=0.65\linewidth]{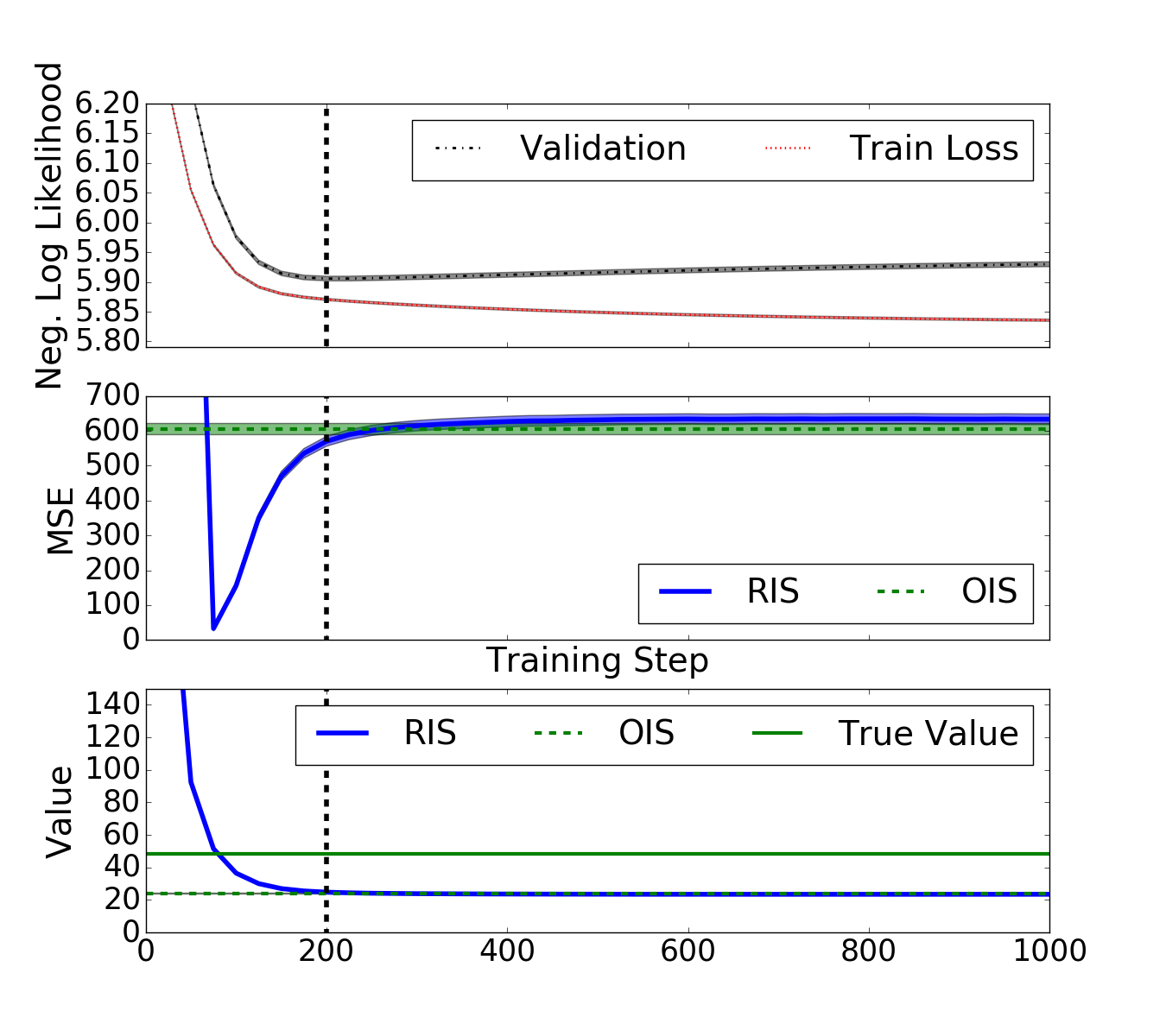}
%\hspace{20pt}
%\subfigure[HalfCheetah]{\includegraphics[width=0.4\linewidth]{figs/cheetah-fit-2}}
%\subfigure{\includegraphics[width=0.85\linewidth]{figs/legend}}
\caption{Mean squared error and estimate of the importance sampling estimator during training of $\pidata$. The x-axis is the number of gradient ascent steps. The top plot shows the training and validation loss curves. The y-axis of the top plot is the average negative log-likelihood. 
The y-axis of the middle plot is mean squared error (MSE).
The y-axis of the bottom plot is the value of the estimate.
MSE is minimized close to, but slightly before, the point where the validation and training loss curves indicate that overfitting is beginning.
This point corresponds to where the $\RIS$ estimate transitions from over-estimating to under-estimating the policy value.
Results are averaged over $200$ trials and the shaded region represents a 95\% confidence interval around the mean result.
 \label{fig:fit_exp-cheetah}}
\end{figure}

\end{appendix}

\end{document}